\documentclass{article}

\PassOptionsToPackage{numbers,compress}{natbib}
\usepackage[final, main]{neurips_2026}

\usepackage[utf8]{inputenc}
\usepackage[T1]{fontenc}
\usepackage{hyperref}
\hypersetup{hidelinks,hypertexnames=false,pdfauthor={},pdftitle={},pdfsubject={},pdfkeywords={}}
\usepackage{url}
\usepackage{booktabs}
\usepackage{amsfonts}
\usepackage{amssymb}
\usepackage{amsmath}
\usepackage{mathtools}
\usepackage{amsthm}
\usepackage{dsfont}
\usepackage{nicefrac}
\usepackage{microtype}
\usepackage[table,dvipsnames]{xcolor}
\usepackage{multirow}
\usepackage{algorithm}
\usepackage{algpseudocode}
\usepackage{graphicx}
\usepackage{subcaption}
\usepackage{enumitem}
\usepackage{array}
\usepackage{tikz}
\usepackage{mdframed}
\usepackage{wrapfig}
\usepackage{pifont}
\usepackage{placeins}

\newcommand{\cmark}{\textcolor{ForestGreen}{\ding{51}}}
\newcommand{\xmark}{\textcolor{red}{\ding{55}}}
\newcommand{\ours}{StruPO}

\definecolor{mygray}{gray}{.93}

\theoremstyle{plain}
\newtheorem{theorem}{Theorem}[section]

\theoremstyle{definition}

\theoremstyle{remark}

\title{From Structural Feedback to Prompt Policies: Learning Faithful Text-to-Image Prompt Editors}
\author{%
  \textbf{%
    Wentao Ye$^{1}$\thanks{Equal contribution.} \quad
    Yali Ye$^{2}$\footnotemark[1] \quad
    Zhiqing Xiao$^{1}$\footnotemark[1] \quad
    Ru Peng$^{1}$ \quad
    Xinpeng Ti$^{1}$%
  }\\[0.4em]
  \textbf{%
    Liyao Li$^{1}$ \quad
    Zhanming Shen$^{1}$ \quad
    Peng Lu$^{1}$\thanks{Corresponding authors: Peng Lu and Jie Zhang.} \quad
    Jie Zhang$^{3}$\footnotemark[2] \quad
    Haobo Wang$^{1}$%
  }\\[0.8em]
  {\normalfont\normalsize
    $^{1}$Zhejiang University
    \qquad
    $^{2}$Tencent AI Lab
  }\\[0.3em]
  {\normalfont\normalsize
    $^{3}$China Telecom Artificial Intelligence Technology (Beijing) Co. Ltd
  }%
}

\begin{document}
\maketitle

\begin{abstract}
Prompt optimization should improve the image a user receives without changing the scene the user requested. Yet stylistic rewriting can weaken object attributes and relations even when global text--image similarity remains high. We propose \ours{}, a structure-aware prompt editor that uses a calibrated scene graph as a shared representation for policy conditioning and image-grounded supervision. The graph exposes what should be preserved, while deletion, reordering, and insertion provide a token-aligned way to modify the prompt. This turns structural feedback from an external critique into a training signal for a reusable editing policy. Learning from that signal requires handling both uncertain grounding and stochastic image generation. \ours{} retains grounding confidence through smooth reward shaping and averages multiple images for each edit before computing group-relative advantages. Experiments span three disjoint 2K-prompt test sets and four frozen generation backbones, with comparisons to learned prompt optimizers, iterative structural feedback, diffusion-model fine-tuning, and inference-time attention control. \ours{} improves the faithfulness--aesthetics trade-off, with consistent gains in relational fidelity. Independent automatic evaluators and a 15-annotator study provide complementary evidence, and human preference margins increase with relation complexity. Controlled ablations connect these gains to graph conditioning, local editing, and image-grounded learning. The resulting policy produces reusable optimized text in one pass after graph calibration, without an iterative critique--rewrite loop.
\end{abstract}

\section{Introduction}

Text-to-image (T2I) models such as Stable Diffusion~\cite{stablediffusion} and DALL-E~\cite{dalle3} have made visual content creation widely accessible, but the quality of their outputs still depends strongly on prompt formulation. Users describe what they want to see; generators also respond to how that intent is expressed through composition, lighting, and stylistic detail. Automatic prompt optimization bridges this gap by transforming a user prompt $x$ into a generator-effective prompt $y$. Its objective is twofold: improve the resulting image's perceptual quality and preserve the scene specified by the user.

These goals need not move together. A rewriter can make a prompt more vivid while changing which objects appear, which attributes belong to them, or how they relate. Style-oriented optimizers~\cite{MagicPrompt,beautifulprompt} and embedding-level relevance constraints~\cite{promptist} address important parts of the task, but a high global text--image score can conceal local semantic failures~\cite{dsg,clipbad2}. A tiger with the wrong coat color or without its requested crown may remain topically similar to the prompt. Figure~\ref{img:intro} illustrates why \emph{preserving the scene} requires a more explicit account of intent than preserving overall similarity.

\begin{wrapfigure}{r}{0.48\textwidth}
\centering
\includegraphics[width=0.46\textwidth]{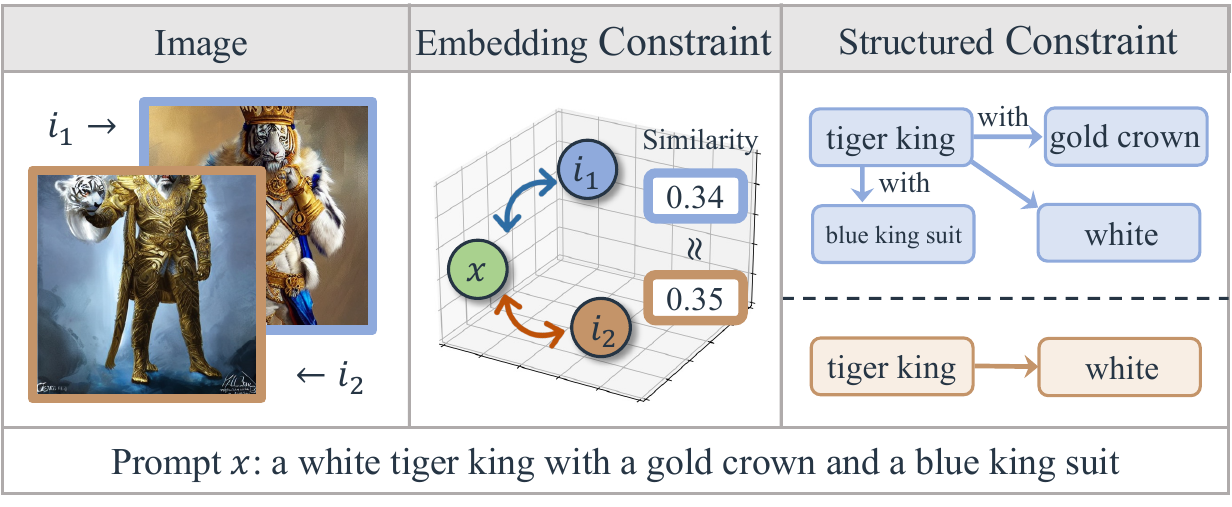}
\caption{Similar global embedding scores can conceal different structural errors. A scene graph makes object attributes and relations explicit.}
\label{img:intro}
\end{wrapfigure}

Scene graphs (SGs) express this intent through object phrases and relations. Complete phrases such as ``gold crown'' retain attributes, while edges specify how the objects are connected. Prior methods already exploit such structure: OPT2I optimizes consistency scores through iterative prompting~\cite{manasimproving}, and VisualPrompter uses atomic visual feedback to target missing concepts~\cite{visualprompter}. These methods establish the value of structural diagnosis. The next question is how to turn that diagnosis into a \emph{learned editing behavior} that generalizes across prompts, rather than repeatedly asking an external rewriter to interpret feedback at test time.

Our central idea is to use \emph{the same structured intent to condition the editor and to evaluate its consequences}. This connects the information available when choosing an edit to the object--attribute--relation constraints checked in the generated image. A graph-conditioned policy can then learn which textual changes improve rendering while retaining semantic anchors. Expressing those changes as deletion, reordering, and insertion further biases the policy toward modifying the input rather than replacing it wholesale. The benefit is not only fewer test-time revisions: it is a closer alignment between the representation of user intent, the granularity of prompt changes, and the feedback used to learn them.

Making this alignment learnable presents two challenges. First, text parsers may confuse style or camera descriptions with visible content, while grounding models provide uncertain evidence about genuine constraints. Second, image generation is stochastic: two renders of the same edited prompt can receive different scores. A policy trained on thresholded, single-image feedback can therefore react to evaluator discontinuities or rendering noise instead of the quality of an edit. Graph calibration, confidence-preserving rewards, and environment-sample averaging address these distinct sources of uncertainty.

We instantiate this approach as \textbf{\ours{}} (\textbf{Stru}ctured \textbf{P}rompt \textbf{O}ptimization). A scene-graph extractor calibrates the input structure, a hybrid encoder exposes text and graph features to the policy, and an edit-action generator produces the optimized prompt. Supervised fine-tuning provides an initial editing behavior; image-grounded reinforcement learning then improves its downstream consequences. Smooth structural rewards retain grounding confidence, and environment-aware GRPO compares edits after averaging multiple renders of each candidate. At inference time, one policy pass follows graph calibration and returns reusable text; the T2I generator stays frozen.

Our contributions are threefold. First, we develop a structure-aware editing policy in which calibrated intent is shared between policy conditioning and image-grounded supervision. Second, we pair confidence-preserving reward shaping with environment-aware group-relative optimization, supported by a reward-stability analysis. Third, we establish the approach's relation-sensitive benefits through controlled ablations, cross-backbone transfer, independent evaluators, and human judgments. The experiments connect improvements to graph access, local edits, and image feedback, while comparisons with learned and iterative optimizers characterize the resulting quality--cost trade-off. Additional evaluator and baseline results are provided in the appendix.

\section{Related Work}

\paragraph{Prompt optimization for T2I generation.}
Early work studies how users construct T2I prompts and derives practical design guidelines~\cite{prompt-ana,design-guildline}. Automatic methods optimize prompt tokens or embeddings through generator gradients in white-box settings~\cite{white-box1,white-box2}, or learn textual reformulations with supervised fine-tuning~\cite{beautifulprompt,neuroprompts}, multimodal supervision~\cite{pr1,pr2}, reinforcement learning~\cite{promptist}, and LLM prompting~\cite{chatgpt1}. Recent learned optimizers broaden the objective beyond style: PAE learns modifiers together with token weights and denoising-time ranges~\cite{pae}, while RePrompt uses GRPO to train reasoning-augmented reprompting from image-level feedback~\cite{reprompt}. \ours{} focuses on a complementary design choice: sharing a calibrated scene graph between the editor's input and its atomic grounding reward, then learning token-aligned edits from that feedback. Appendix~\ref{app:learned_baselines} compares PAE and RePrompt under matched SD v1.5/Lexica evaluation.

\paragraph{Structure-aware feedback for prompt optimization.}
OPT2I iteratively revises prompts to maximize consistency, including Davidsonian scene-graph (DSG) scores~\cite{manasimproving}. VisualPrompter decomposes prompts into atomic concepts, identifies missing content through visual feedback, and performs targeted refinement without training~\cite{visualprompter}. Both demonstrate that structural feedback can improve prompt--image alignment. \ours{} learns a reusable editor from this type of feedback: the graph conditions its decisions, and image-grounded rewards evaluate the edited prompt against the same constraints. The distinction is thus learned graph-conditioned editing versus test-time feedback-driven revision. Appendix~\ref{app:iterative_baseline} compares the resulting policy with a three-round GPT-4o structural-feedback system on the full 2K-prompt Lexica test set.

\paragraph{Scene graphs and fine-grained T2I evaluation.}
Scene graphs represent objects and pairwise relations~\cite{scene-graph-survey,visualgenome}. Image scene-graph generation combines object detection with contextual relation reasoning~\cite{impp,neuralmotifs,vctree,sgtr}; language-supervised systems extend grounding to open-vocabulary concepts~\cite{glip,language-supervised-sgg1}. For T2I evaluation, DSG~\cite{dsg} decomposes semantic correctness into dependent atomic questions, exposing failures that global similarity can miss. \ours{} uses object and relation grounding for policy learning while retaining attribute-bearing phrases in both graph features and grounding queries. We assess this learned objective with held-out OR/RR, and complement it with Gemini-VQA, human judgments, and independent evaluators (Appendix~\ref{app:external_metrics}).

\paragraph{Text editing and broader faithfulness interventions.}
Explicit text editors decompose rewriting into deletion, reordering, and insertion~\cite{felix,semi-edit}, providing a token-level link between an input and its revision. \ours{} adapts this action representation to graph-conditioned, image-grounded prompt optimization. Related approaches intervene elsewhere in the generation process: DPOK updates diffusion-model parameters~\cite{dpok}, and Attend-and-Excite adjusts the latent trajectory using attention-based guidance~\cite{attendandexcite}. These approaches expose different control and access requirements. Appendix~\ref{app:cross_paradigm} compares their quality, offline cost, and per-image interventions with the reusable-text interface of \ours{}.

\section{Proposed Method: \ours{}}

\paragraph{Problem formulation.}
Let $g$ denote a frozen T2I generator. Given a user prompt $x$, prompt optimization learns a policy $\pi_\theta$ that produces an optimized prompt $y$ such that images $i_y\sim g(y)$ improve perceptual quality while preserving the semantic intent of $x$. We formulate structure-aware prompt optimization as a single-step reinforcement-learning problem, or contextual bandit~\cite{contextual_bandit}. A calibrated scene graph $G_x=(O_x,V_x)$ represents object phrases, including their attributes, and relation triplets. The same $G_x$ conditions the policy and specifies the constraints used to score its output. The policy samples an edit-action sequence $a\sim\pi_\theta(\cdot\mid x,G_x)$ and deterministically applies it to obtain $y$. The environment then renders $i_y\sim g(y)$ and returns an image-grounded scalar reward $r_{x,y}$. The learning objective is $\mathcal{J}(\theta)=\mathbb{E}_{x,a,i_y}[r_{x,y}]$. Here, ``single-step'' refers to one complete prompt edit and its image reward, not to the token-level decoding steps within the editor.

\begin{figure*}[htb]
\centering
\includegraphics[width=0.96\textwidth]{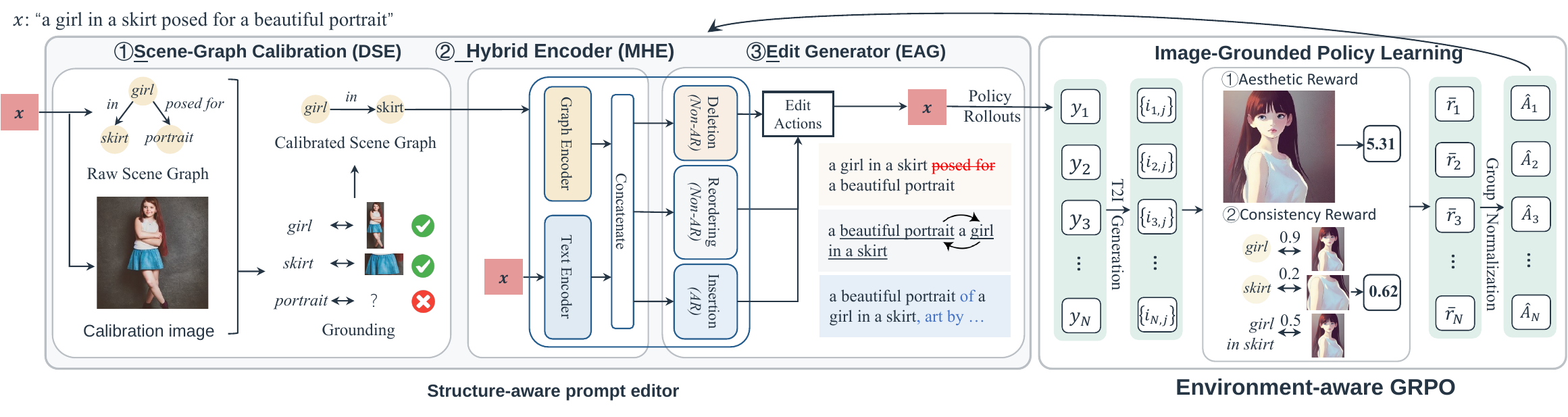}
\caption{Overview of \ours{}. Calibrated structural intent conditions the DSE--MHE--EAG editor and defines image-grounded consistency feedback. Environment-aware GRPO averages renders of each candidate before comparing edit rollouts.}
\label{img:framework}
\end{figure*}

\subsection{Scene-Graph-Aware Prompt Editing Policy}
Figure~\ref{img:framework} separates three roles: scene-graph extraction and calibration (DSE) establish the structural context; a multimodal hybrid encoder (MHE) makes it available to the policy; and an edit action generator (EAG) turns it into prompt changes. Together, they connect the user's semantic anchors to the decisions optimized by the image-grounded reward.

\paragraph{Scene-graph extraction and calibration (DSE).}
\label{para:method_archi_dsge}
DSE constructs a calibrated scene graph $G_x=(O_x,V_x)$ from the input prompt $x$. A language parser first produces a coarse graph $\tilde G_x=(\tilde O_x,\tilde V_x)$, but text-only parsing can introduce artifacts. For example, in ``\texttt{a girl photographed with a Leica lens},'' the phrase ``\texttt{Leica lens}'' may be misinterpreted as a visible object rather than a photographic condition.

DSE uses one image $i_x\sim g(x)$ to help identify ambiguous parser outputs. Parsed elements are partitioned into \emph{text-core} elements and \emph{auxiliary} elements. Text-core objects are explicit syntactic heads or entities participating in relations; auxiliary elements are ambiguous noun phrases that often describe style, camera, or medium. Visual evidence filters auxiliary elements, while text-core elements are retained even when the initial image fails to render them. Thus, a missing requested object remains a constraint for subsequent optimization; an ambiguous camera or style phrase can be removed from the graph.

Object grounding uses GLIP~\cite{glip}. Let $C_{\mathrm{GLIP}}(i_x)$ denote the detected open-vocabulary labels, and let $\tilde O_x^{\mathrm{core}}$ and $\tilde O_x^{\mathrm{aux}}$ denote the core and auxiliary object sets. We construct
\begin{equation*}
O_x=\tilde O_x^{\mathrm{core}}\cup\{o\in\tilde O_x^{\mathrm{aux}}\mid o\in C_{\mathrm{GLIP}}(i_x)\}.
\end{equation*}
For relation grounding, write each triplet as $v=(o_{\mathrm{sub}},p,o_{\mathrm{obj}})$, where $p$ is the predicate. Given endpoint boxes, we crop their minimal enclosing patch $\gamma^v_{i_x}$ and query BLIP~\cite{blip} with $q_v=o_{\mathrm{sub}}\oplus p\oplus o_{\mathrm{obj}}$. Core relations are retained, whereas auxiliary relations require positive visual support:
\begin{equation*}
V_x=\{v\in\tilde V_x\mid o_{\mathrm{sub}},o_{\mathrm{obj}}\in O_x,\; v\in\tilde V_x^{\mathrm{core}}\ \text{or}\ P_V(\mathrm{Yes}\mid q_v,\gamma^v_{i_x})>0.5\}.
\end{equation*}
Attributes are represented explicitly through complete phrases rather than separate typed nodes: modifiers remain attached in object labels or relation arguments, e.g., ``black hair,'' ``gold crown,'' and ``blue king suit.'' The same attribute-bearing phrases are therefore available to policy conditioning and image grounding. Appendix~\ref{app:attribute_ablation} isolates this pathway, while Appendix~\ref{app:dse_stress} evaluates DSE under injected parser noise.

\paragraph{Multimodal hybrid encoder (MHE).}
MHE maps the prompt and SG into a shared representation. A frozen CLIP text encoder~\cite{clip} produces token-level features $f_{\mathrm{text}}(x)\in\mathbb{R}^{l\times d}$. Each object phrase and serialized relation query $q_v$ is encoded by the same text encoder, with the \texttt{[EOS]} embedding used as the initial representation of a graph element. A two-layer GCN propagates information over the topology induced by $V_x$, producing $f_{\mathrm{graph}}(G_x)\in\mathbb{R}^{n\times d}$ for $n=|O_x|+|V_x|$. The editor receives the concatenated representation $f(x,G_x)=f_{\mathrm{text}}(x)\oplus f_{\mathrm{graph}}(G_x)$. This representation supplies token detail and graph context to the same editor. Appendix~\ref{app:clip_sensitivity} evaluates encoder sensitivity with a T5 replacement and a CLIP-free training variant.

\paragraph{Edit action generator (EAG).}
Represent $x$ as a token sequence $\{t_j\}_{j=1}^{l}$, with special start and end tokens. EAG predicts a compositional edit sequence from $f(x,G_x)$. Following explicit text-editing models~\cite{felix,semi-edit}, the actions are applied in the order illustrated in Figure~\ref{fig:edit_action}: deletion, reordering, and insertion.

\begin{figure}[htb]
\centering
\includegraphics[width=0.88\linewidth]{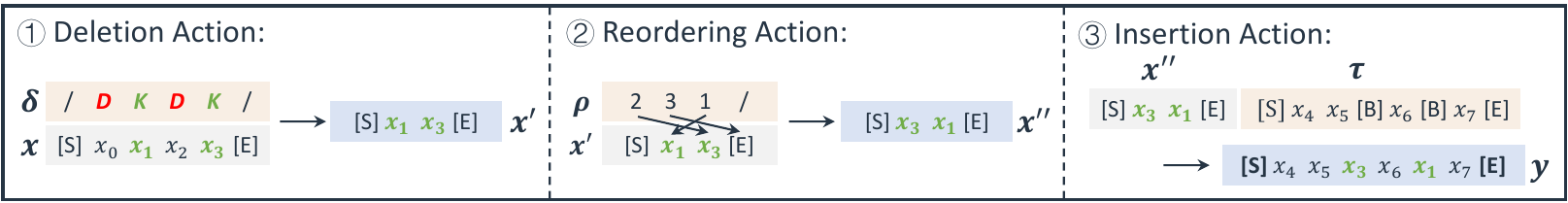}
\caption{The three edit operations used by EAG. Tokens are first retained or deleted, the retained subsequence is reordered, and new tokens are inserted into the resulting slots.}
\label{fig:edit_action}
\end{figure}

\begin{itemize}[leftmargin=1.35em,itemsep=1pt,topsep=2pt]
    \item \textit{Deletion.} For each token $t_j$, the deletion policy samples $\delta_j\in\{0,1\}$ from $\pi_{\theta_1}^{\mathrm{del}}(\delta_j\mid j,f(x,G_x))$, where $1$ retains the token and $0$ removes it. Boundary tokens are fixed.
    \item \textit{Reordering.} Let $x_{\mathrm{del}}=\{t_j\}_{j=1}^{l'}$ be the retained sequence. For each $j<l'$, the reordering policy predicts the successor distribution $\pi_{\theta_2}^{\mathrm{reo}}(\rho_j=k\mid j,x_{\mathrm{del}},f(x,G_x))$. A complete successor assignment induces a permutation decoded by beam search.
    \item \textit{Insertion.} Given the reordered sequence $x_{\mathrm{reo}}$, the insertion policy $\pi_{\theta_3}^{\mathrm{ins}}$ autoregressively samples new tokens or \texttt{[BOS]} slot markers. Retained tokens are filled into these slots in order, yielding the optimized prompt $y$.
\end{itemize}
The complete action is $a=(\delta,\rho,\tau)$. Deletion and reordering are decoded non-autoregressively, while insertion is autoregressive. This semi-autoregressive parameterization preserves an explicit correspondence to retained input tokens while allowing new phrasing and style. It is an inductive bias toward conservative edits, not a hard guarantee that every constraint is preserved; the image reward determines which edits are reinforced.

\subsection{Policy Training}
Training first teaches the policy how to express prompt revisions, then which revisions improve images. Supervised fine-tuning (SFT) warm-starts the editor from input--target pairs, and on-policy RL optimizes the rendered outcomes.

\paragraph{Supervised fine-tuning.}
For a paired dataset $\mathcal{D}=\{(x,y)\}$, token alignment yields a target edit sequence $a^\star(x,y)$: unmatched input tokens define deletions, retained tokens define successor pointers, and unmatched output tokens define insertions. We minimize
$\mathcal{L}_{\mathrm{SFT}}=-\mathbb{E}_{(x,y)\in\mathcal{D}}\log\pi_\theta(a^\star(x,y)\mid x,G_x)$.

\paragraph{Uncertainty-aware reward shaping.}
The reward combines structured consistency, aesthetic quality, and coarse image--text consistency. A common discrete structural score has the form $\mathbb{E}_{z\in G_x}\mathds{1}[P(z\mid i_y)\ge u]$, where $z$ is an object or relation and $u\in(0,1)$ is a fixed threshold. A small confidence change near $u$ can flip a constraint's contribution, whereas an improvement away from $u$ receives no credit. We instead retain grounding confidence. For $K=|O_x|+|V_x|\ge1$ constraints and confidence vector $p(I)\in[0,1]^K$, define the unshaped score $R_s(I)=K^{-1}\sum_k p_k(I)$. The following result characterizes its sensitivity to grounding perturbations.

\begin{theorem}
\label{theorem:smooth_reward}
For a rollout observation $I=(a,i_y)$, let $\widehat{\nabla}J(\theta;I)=\Psi(\theta;I)R_s(I)$, where $\Psi(\theta;I)=\nabla_\theta\log\pi_\theta(a\mid x,G_x)$. For any two rollouts $I,I'$ under the same context $(x,G_x)$,
\[
\|\widehat{\nabla}J(\theta;I)-\widehat{\nabla}J(\theta;I')\|
\le \|\Psi(\theta;I)-\Psi(\theta;I')\|+\frac{\|\Psi(\theta;I)\|}{K}\|p(I)-p(I')\|_1.
\]
A thresholded score generally has no uniform finite Lipschitz bound at $u$.
\end{theorem}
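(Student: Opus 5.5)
The bound comes from an add-and-subtract decomposition, after which we control the two pieces using only that $R_s$ is an average of numbers in $[0,1]$. Write
\[
\Psi(\theta;I)R_s(I)-\Psi(\theta;I')R_s(I')=\bigl(\Psi(\theta;I)-\Psi(\theta;I')\bigr)R_s(I')+\Psi(\theta;I)\bigl(R_s(I)-R_s(I')\bigr).
\]
This splits the difference into a score-function variation weighted by the reward of the second rollout, plus the score function of the first rollout weighted by the reward difference. Pairing $\Psi(I)$ with the reward difference, and not $\Psi(I')$, is what puts $\|\Psi(\theta;I)\|$ in the second term of the statement.

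We then apply the triangle inequality and bound each term. For the first, every $p_k(I')\in[0,1]$, so $0\le R_s(I')\le 1$, and the first norm is at most $\|\Psi(\theta;I)-\Psi(\theta;I')\|$. For the second, $R_s(I)-R_s(I')=K^{-1}\sum_k\bigl(p_k(I)-p_k(I')\bigr)$, so $|R_s(I)-R_s(I')|\le K^{-1}\|p(I)-p(I')\|_1$. Multiplying by $\|\Psi(\theta;I)\|$ gives exactly the second term. Because $(x,G_x)$ is shared, $K$ is the same for both rollouts and the averages are taken over the same index set; this is the only place the same-context hypothesis is used. $K\ge1$ keeps the normalization well defined.

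For the negative claim, take the thresholded score $R_u(I)=K^{-1}\sum_k\mathds{1}[p_k(I)\ge u]$. Fix all coordinates except $p_1$, and let $p_1(I)=u$ and $p_1(I')=u-\epsilon$. Then $|R_u(I)-R_u(I')|=1/K$ while $\|p(I)-p(I')\|_1=\epsilon$. The ratio $1/(K\epsilon)$ is unbounded as $\epsilon\to0$, so no inequality of the form $|R_u(I)-R_u(I')|\le L\|p(I)-p(I')\|_1$ holds uniformly near $u$.

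To turn this into the gradient-estimator statement, choose $a=a'$, so that $\Psi(\theta;I)=\Psi(\theta;I')$, with $\Psi\neq0$. The estimator difference is then $\|\Psi\|/K$ for an arbitrarily small confidence perturbation. The main subtlety is this last step: "generally" must mean that there exist rollouts with nonzero score function whose confidences straddle $u$. I would state that as the nondegeneracy assumption and not claim it for every policy.
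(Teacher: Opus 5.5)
Your proposal is correct and follows the paper's proof: add and subtract $\Psi(\theta;I)R_s(I')$, bound $|R_s(I')|\le 1$, use the $\ell_1$ Lipschitz bound on $R_s$, and give a threshold-straddling counterexample for the negative claim. The differences are cosmetic: the paper's counterexample takes $K=1$ with confidences $u\pm\varepsilon$, whereas you perturb one coordinate of a general $K$-vector and make the transfer to the gradient explicit by fixing the action so that $\Psi(\theta;I)=\Psi(\theta;I')\neq 0$.
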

Theorem~\ref{theorem:smooth_reward} separates changes in the policy score function from changes in grounding confidence. For a fixed edit, the first term vanishes: confidence perturbations affect the gradient contribution continuously. Appendix~\ref{app:theory} gives the proof and a reward-variation bound.

In training, we use the monotone clipped-log transform $\phi(t)=[\log(\max\{t,\eta\})]_{c_{\min}}^{c_{\max}}$, with $\eta>0$ and $[s]_a^b=\min\{b,\max\{a,s\}\}$. This bounded transform retains a finite Lipschitz constant (Appendix~\ref{app:theory}); its constants are fixed on validation data. The structured reward is
\begin{equation*}
r_{x,y}^{\mathrm{con}}=
\frac{\sum_{o\in O_x}\phi(P_O(o\mid i_y))+\sum_{v\in V_x}\phi(P_V(\mathrm{Yes}\mid q_v,\gamma^v_{i_y}))}{\max(1,K)}.
\end{equation*}
GLIP supplies object-phrase confidence and BLIP verifies relation-specific crops, so atomic consistency is scored independently of CLIP. Two frozen auxiliary rewards complete the objective. Following Promptist~\cite{promptist}, $r_{x,y}^{\mathrm{aes}}$ uses the released improved aesthetic predictor, and $r_{x,y}^{\mathrm{CLIP}}$ regularizes global prompt--image consistency. The final reward is $r_{x,y}=\alpha_1r_{x,y}^{\mathrm{con}}+\alpha_2r_{x,y}^{\mathrm{aes}}+\alpha_3r_{x,y}^{\mathrm{CLIP}}$, with weights selected on a disjoint validation split. Appendix~\ref{app:implementation} gives the predictor specification and reward weights.

\paragraph{Environment-aware GRPO.}
We optimize the policy with GRPO~\cite{grpo}. Rendering remains stochastic even when grounding scores are smoothed. To compare edits rather than individual lucky or unlucky renders, we average environment samples before group normalization. For each prompt $x$, we sample $N$ edit rollouts $a_i\sim\pi_{\theta_{\mathrm{old}}}(\cdot\mid x,G_x)$. For every $a_i$, we render $M$ images, obtain rewards $r_{i,j}$, and compute $\bar r_i=M^{-1}\sum_{j=1}^{M}r_{i,j}$. The normalized advantage is
$\widehat A_i=(\bar r_i-\operatorname{mean}_k\bar r_k)/(\operatorname{std}_k\bar r_k+\epsilon_A)$.
The clipped objective is
\begin{align}
\mathcal{J}_{\mathrm{GRPO}}(\theta)=\mathbb{E}\frac1N\sum_i\Big[
\min\big(w_i\widehat A_i,\operatorname{clip}(w_i,1-\epsilon,1+\epsilon)\widehat A_i\big)
-\beta D_{\mathrm{KL}}(\pi_\theta\|\pi_{\mathrm{ref}})+\lambda\mathcal{H}(\pi_\theta)\Big],
\end{align}
where $w_i=\pi_\theta(a_i\mid x,G_x)/\pi_{\theta_{\mathrm{old}}}(a_i\mid x,G_x)$. The KL term anchors the policy to its SFT reference, while entropy regularization maintains exploration. Smoothing controls sensitivity within a render; multi-image averaging reduces sampling variation across renders of the same edit. Algorithm~\ref{algorithm:rl} in Appendix~\ref{appendix:rl} gives the complete training procedure.

\section{Experiments}

\subsection{Experimental Settings}

\paragraph{Data collection.}
For SFT, we collect 72K optimized prompts from Lexica as targets. Following Promptist~\cite{promptist}, each optimized prompt is converted into two input variants: (i) a semantic skeleton obtained by removing stylistic modifiers and (ii) a GPT-4o rewrite of that skeleton. This procedure yields 144K input--target pairs. SFT uses the full paired set, whereas RL uses 24K input prompts sampled from the training split without access to their targets. Evaluation uses three disjoint test sets: Lexica and DiffusionDB~\cite{diffusiondb} as in-domain prompt-optimization distributions, and COCO captions~\cite{coco} as an out-of-domain distribution. Each test set contains 2K prompts and is disjoint from training, validation, and hyperparameter selection.

\paragraph{Implementation details.}
Unless stated otherwise, we use GLIP-T~\cite{glip} for object grounding, BLIP-base~\cite{blip} for relation verification, Stable Diffusion v1.5~\cite{stablediffusion} as the default generator, and CLIP ViT-L/14~\cite{clip} as the frozen text encoder. The edit policy is initialized from BART-base~\cite{lewis-etal-2020-bart}; the graph encoder is a two-layer GCN with hidden size 768. Grounding models, CLIP, and the T2I generator remain frozen, and only the GCN and editing policy are updated. We use AdamW~\cite{adamw} with learning rate $2\times10^{-5}$. SFT runs for approximately 50 epochs, followed by 24K RL prompts organized into 6K updates. Each RL context samples $N=4$ policy rollouts and $M=3$ images per rollout. Hyperparameters are selected once on a held-out Lexica validation split and then fixed across all test distributions and generation backbones. Appendix~\ref{app:implementation} provides model versions, the aesthetic-predictor checkpoint and architecture, optimization settings, and compute accounting.

\paragraph{Evaluation protocol.}
We evaluate perceptual quality and semantic faithfulness. AES$_\Delta$ denotes the change in the frozen aesthetic score relative to the raw prompt. For structured consistency, we evaluate held-out images with thresholded grounding rather than the smoothed training reward, reporting object recall (OR), relation recall (RR), and $\mathrm{SGR}=(\mathrm{OR}+\mathrm{RR})/2$. Unless noted, each score averages three matched-seed generations per prompt. We compare \ours{} with template prompting~\cite{bestprompt}, learned LM optimizers (Promptist~\cite{promptist}, MagicPrompt~\cite{MagicPrompt}, and BeautifulPrompt~\cite{beautifulprompt}), structure-aware feedback prompting (VisualPrompter~\cite{visualprompter}), GPT rephrasing and few-shot prompting, and human-crafted target prompts when available. The Template baseline appends a universal style-only suffix while leaving semantic content unchanged, so it serves as a near-original-prompt reference. OR/RR measure generalization on the trained structural objective; Gemini-VQA and human judgments assess faithfulness through separate evaluators. TIFA, VQAScore, ImageReward, HPSv2.1, and further evaluation details appear in Appendix~\ref{app:external_metrics}.

\subsection{Main Results}
We first evaluate the quality--faithfulness trade-off, then test transfer across generators and perceptibility to human annotators. The ablations that follow isolate the representation and learning choices behind these outcomes.

\begin{table*}[htb]
\centering
\caption{Overall performance with Stable Diffusion v1.5. Gemini-VQA is evaluated on Lexica only and is never used for training or model selection. Best values are bold. VisualPrompter attains the highest COCO SGR but decreases aesthetics; \ours{} obtains the strongest in-domain faithfulness--aesthetics trade-off and the largest AES$_\Delta$ on all three datasets.}
\label{tab:exp_main}
\resizebox{\textwidth}{!}{
\scriptsize
\begin{tabular}{l|ccccc|cccc|cccc}
\toprule
\multirow{2}{*}{\textbf{Method}}&\multicolumn{5}{c|}{\textbf{Lexica}: In-domain}&\multicolumn{4}{c|}{\textbf{DiffusionDB}: In-domain}&\multicolumn{4}{c}{\textbf{COCO}: Out-of-domain} \\
& AES$_\Delta$ & OR(\%) & RR(\%) & SGR(\%) & Gemini & AES$_\Delta$ & OR(\%) & RR(\%) & SGR(\%) & AES$_\Delta$ & OR(\%) & RR(\%) & SGR(\%) \\
\midrule
Template & 0.53 & 89.50 & 59.63 & 74.57 & 77.6 & 0.75 & 91.03 & 62.65 & 76.84 & 0.68 & 90.18 & 65.02 & 77.60\\
Promptist & 0.48 & 89.55 & 61.25 & 75.40 & 79.5 & 0.64 & 92.17 & 60.14 & 76.16 & 0.80 & 90.06 & 64.61 & 77.33\\
MagicPrompt & 0.46 & 89.50 & 61.67 & 75.59 & 78.1 & 0.62 & 92.75 & 58.92 & 75.84 & 0.67 & 89.17 & 59.12 & 74.14\\
Human-crafted & 0.54 & 89.19 & 56.31 & 72.75 & 78.2 & 0.42 & 92.32 & 59.44 & 75.88 & -- & -- & -- & --\\
VisualPrompter & 0.62 & 90.26 & 53.49 & 71.88 & 79.2 & 0.33 & 83.11 & 23.43 & 53.27 & -0.36 & 90.50 & \textbf{76.01} & \textbf{83.26}\\
BeautifulPrompt & 0.57 & 85.70 & 39.85 & 62.78 & 73.8 & 0.79 & 89.37 & 37.61 & 63.49 & 0.69 & 81.91 & 35.27 & 58.59\\
Rephrase (GPT-3.5) & 0.43 & 89.82 & 61.68 & 75.74 & 78.4 & 0.59 & 91.73 & 62.42 & 77.07 & 0.54 & 89.54 & 63.54 & 76.54\\
Rephrase (GPT-4o) & 0.44 & 88.38 & 55.52 & 71.95 & 78.6 & 0.63 & 91.82 & 56.13 & 73.97 & 0.67 & 90.53 & 66.95 & 78.74\\
Few-shot (GPT-4o) & 0.51 & 89.27 & 60.58 & 74.92 & 78.0 & 0.66 & 90.95 & 62.75 & 76.85 & 0.67 & 90.38 & 66.57 & 78.48\\
\midrule
\rowcolor{mygray}\textbf{\ours{} (Ours)} & \textbf{0.63} & \textbf{91.84} & \textbf{69.64} & \textbf{80.74} & \textbf{82.4} & \textbf{0.91} & \textbf{92.94} & \textbf{65.49} & \textbf{79.22} & \textbf{1.02} & \textbf{91.28} & 66.59 & 78.94\\
\bottomrule
\end{tabular}}
\end{table*}

Table~\ref{tab:exp_main} shows that the clearest gains arise in relation recall. On Lexica, \ours{} improves RR by 7.96 points over the best listed baseline and raises SGR to 80.74 while also improving aesthetics. On DiffusionDB, it achieves the best AES$_\Delta$, OR, RR, and SGR. The larger separation in RR than in OR matches the method's emphasis: retaining object words alone does not preserve the relations between them. The held-out Gemini-VQA column follows the same Lexica trend and ranks \ours{} highest, complementing the more localized grounding metrics.

On out-of-domain COCO, the methods occupy different operating points. VisualPrompter obtains the highest SGR, driven by RR, but its aesthetic score decreases by 0.36. In contrast, \ours{} yields the largest aesthetic improvement and retains structured consistency comparable to the GPT-4o baselines. The pattern separates aesthetic improvement from literal constraint satisfaction: VisualPrompter favors relation recall, whereas \ours{} combines a positive aesthetic shift with competitive consistency. The independent metrics in Appendix~\ref{app:external_metrics} show the same distinction on COCO.

\paragraph{Generalization across T2I backbones.}
\begin{table*}[htb]
\centering
\caption{Transferability across T2I backbones on Lexica. The editor is trained with SD v1.5 feedback and evaluated without retraining on SD v1.4, SD v2.1, and FLUX.1-dev.}
\label{tab:exp_generalize}
\resizebox{0.96\textwidth}{!}{
\scriptsize
\begin{tabular}{l|cccc|cccc|cccc}
\toprule
\multirow{2}{*}{\textbf{Method}}&\multicolumn{4}{c|}{\textbf{Stable Diffusion v1.4}}&\multicolumn{4}{c|}{\textbf{Stable Diffusion v2.1}}&\multicolumn{4}{c}{\textbf{FLUX.1-dev}}\\
& AES$_\Delta$ & OR(\%) & RR(\%) & SGR(\%) & AES$_\Delta$ & OR(\%) & RR(\%) & SGR(\%) & AES$_\Delta$ & OR(\%) & RR(\%) & SGR(\%)\\
\midrule
Template & .53 & 89.19 & 58.38 & 73.78 & .33 & 88.02 & 59.25 & 73.64 & .83 & 87.88 & 46.03 & 66.96\\
Promptist & .47 & \textbf{89.75} & 60.64 & 75.19 & .34 & 87.75 & 59.57 & 73.66 & .88 & 90.17 & 53.41 & 71.79\\
MagicPrompt & .48 & 89.44 & 60.18 & 74.81 & .32 & 87.98 & 55.65 & 71.81 & .82 & 90.16 & 52.81 & 71.48\\
Human-crafted & .56 & 89.56 & 59.48 & 74.52 & .39 & 87.75 & 54.28 & 71.01 & .62 & 87.92 & 49.55 & 68.74\\
VisualPrompter & .62 & 88.10 & 47.05 & 67.57 & \textbf{.62} & 88.07 & 46.70 & 67.38 & .63 & 88.08 & 46.53 & 67.31\\
BeautifulPrompt & .59 & 85.90 & 41.05 & 63.47 & .42 & 84.24 & 37.54 & 60.89 & .94 & 89.83 & 30.73 & 60.28\\
Few-shot (GPT-4o) & .44 & 89.05 & 59.13 & 74.09 & .29 & 87.65 & 55.73 & 71.69 & .83 & 87.82 & 46.99 & 67.41\\
\midrule
\rowcolor{mygray}\textbf{\ours{} (Ours)} & \textbf{.68} & 89.18 & \textbf{61.82} & \textbf{75.50} & .46 & \textbf{89.14} & \textbf{61.57} & \textbf{75.36} & \textbf{1.14} & \textbf{91.41} & \textbf{62.43} & \textbf{76.92}\\
\bottomrule
\end{tabular}}
\end{table*}

Table~\ref{tab:exp_generalize} tests whether the learned editor overfits to idiosyncrasies of SD v1.5. Without retraining, \ours{} achieves the highest RR and SGR on all three unseen generators. On FLUX.1-dev, for example, RR is 62.43 versus Promptist's 53.41, alongside a larger aesthetic gain. VisualPrompter has higher AES$_\Delta$ on SD v2.1, while \ours{} has higher structural recall. This transfer pattern supports reuse of the learned textual edits across generators.

\paragraph{Human evaluation.}
\label{para:human_eval}
We conduct a pairwise study with 15 annotators over all 2K Lexica prompts. Annotators see the original prompt and two anonymized matched-seed images in randomized left--right order, then judge aesthetics and semantic consistency separately. Annotators can select a tie when neither image is preferable; ties are retained without a secondary tie-breaking rule. We aggregate each pair by majority vote. Agreement is moderate, with Fleiss' $\kappa=.40$ for aesthetics and $.43$ for consistency. Table~\ref{tab:humaneval} reports the resulting win/tie/loss percentages for \ours{}.

\begin{table}[htb]
\centering
\caption{Human preference on Lexica. Bars show \ours{} wins (blue), ties (gray), and losses (sand), from left to right, with the percentages printed in each segment.}
\label{tab:humaneval}
\resizebox{0.80\linewidth}{!}{
\begin{tabular}{lcc}
\toprule
\textbf{Comparison} & \textbf{Aesthetics} \textit{(win/tie/loss)} & \textbf{Consistency} \textit{(win/tie/loss)}\\
\midrule
\ours{} vs. Promptist
& \raisebox{-0.15\height}{\includegraphics[width=0.3\textwidth]{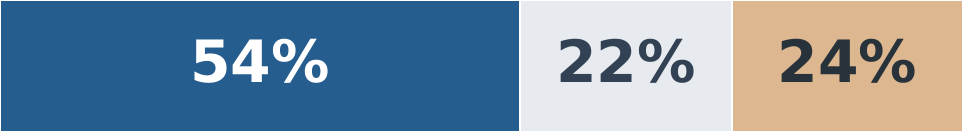}}
& \raisebox{-0.15\height}{\includegraphics[width=0.3\textwidth]{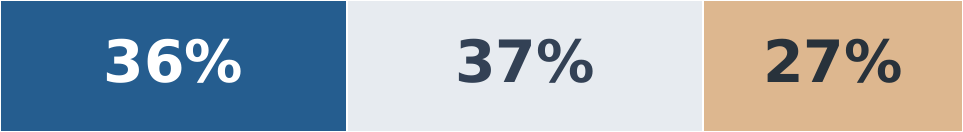}}\\
\ours{} vs. MagicPrompt
& \raisebox{-0.15\height}{\includegraphics[width=0.3\textwidth]{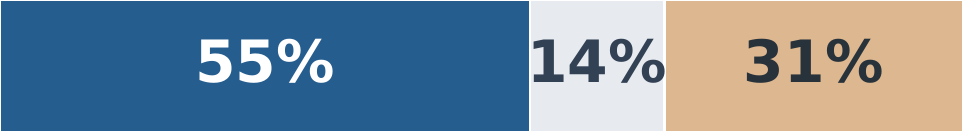}}
& \raisebox{-0.15\height}{\includegraphics[width=0.3\textwidth]{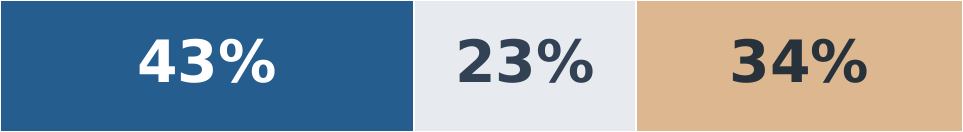}}\\
\ours{} vs. VisualPrompter
& \raisebox{-0.15\height}{\includegraphics[width=0.3\textwidth]{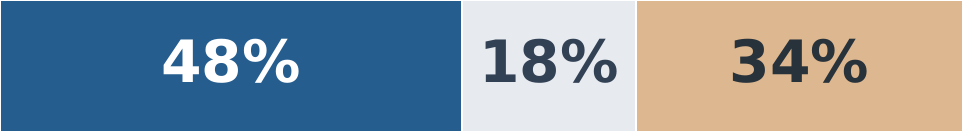}}
& \raisebox{-0.15\height}{\includegraphics[width=0.3\textwidth]{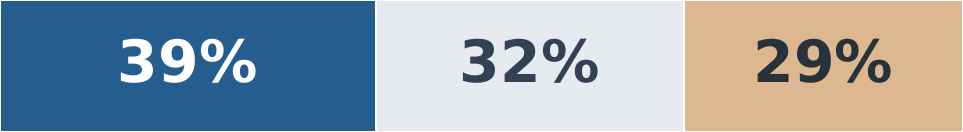}}\\
\ours{} vs. BeautifulPrompt
& \raisebox{-0.15\height}{\includegraphics[width=0.3\textwidth]{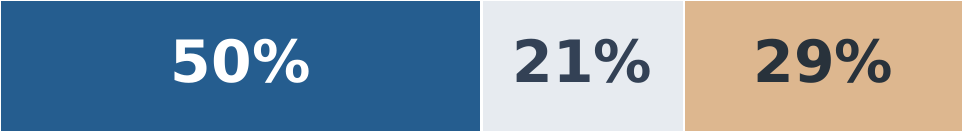}}
& \raisebox{-0.15\height}{\includegraphics[width=0.3\textwidth]{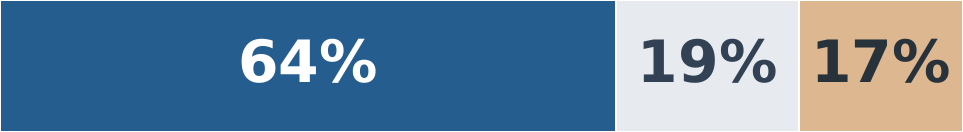}}\\
\ours{} vs. Few-shot GPT-4o
& \raisebox{-0.15\height}{\includegraphics[width=0.3\textwidth]{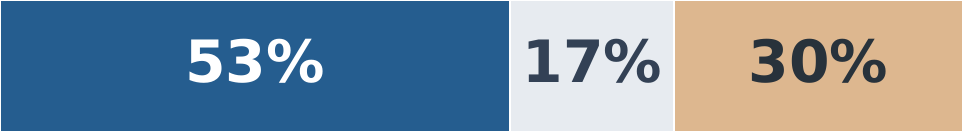}}
& \raisebox{-0.15\height}{\includegraphics[width=0.3\textwidth]{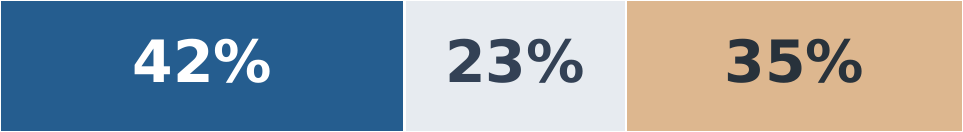}}\\
\bottomrule
\end{tabular}}
\end{table}

In Table~\ref{tab:humaneval}, \ours{} has a positive win--loss margin for both criteria against all five comparators. Consistency margins against Promptist, VisualPrompter, and few-shot GPT-4o are $+9$, $+10$, and $+7$ percentage points, respectively, with ties in 23--37\% of comparisons. All three margins have positive 95\% confidence intervals. Crucially, the advantage becomes more perceptible as relations accumulate: from $|V_x|=1$ to $|V_x|\ge3$, the corresponding margins rise from $+6$ to $+16$, $+6$ to $+17$, and $+3$ to $+14$ points. This links the human result to the compositional difficulty targeted by the method. Appendix~\ref{app:human_details} reports confidence intervals, sign tests, forced-choice replication, and the DiffusionDB/COCO studies.

\subsection{Ablation Study}

\begin{table}[htb]
\centering
\caption{Ablation study on Lexica. DSE, MHE, and EAG test the architecture; ``w/o RL'' and ``w/o $r^{\mathrm{con}}$'' test image-grounded optimization. The w/o-EAG variant replaces local editing with free-form autoregressive generation while retaining the calibrated SG, reward, and GRPO.}
\label{tab:ablation}
\resizebox{0.64\linewidth}{!}{
\begin{tabular}{l|ccc|cccc}
\toprule
\textbf{Variant} & DSE & MHE & EAG & AES$_\Delta$ & OR & RR & SGR\\
\midrule
\rowcolor{mygray}\ours{} & \cmark & \cmark & \cmark & .63 & \textbf{91.84} & \textbf{69.64} & \textbf{80.74}\\
w/o DSE & \xmark & \cmark & \cmark & .64 & 89.24 & 58.14 & 73.69\\
w/o MHE & \cmark & \xmark & \cmark & .65 & 89.25 & 61.41 & 75.33\\
w/o EAG & \cmark & \cmark & \xmark & .57 & 88.92 & 58.44 & 73.68\\
w/o RL & \cmark & \cmark & \cmark & .57 & 89.52 & 58.55 & 74.04\\
w/o $r^{\mathrm{con}}$ & \cmark & \cmark & \cmark & \textbf{.68} & 89.19 & 61.08 & 75.14\\
\bottomrule
\end{tabular}}
\end{table}

Table~\ref{tab:ablation} tests whether the gains depend on structural input, the editing parameterization, and image feedback. Removing DSE introduces uncalibrated parser noise and reduces SGR by 7.05 points. Removing MHE prevents the editor from directly observing graph structure and lowers SGR by 5.41 points. Replacing EAG with unrestricted generation causes a 7.06-point drop despite retaining graph conditioning, the reward, and GRPO. Thus, the action representation contributes beyond simply supplying a structural reward to a rewriter. Removing RL lowers SGR by 6.70 points, showing the benefit of optimizing rendered outcomes beyond supervised imitation. Removing $r^{\mathrm{con}}$ increases aesthetics slightly but lowers SGR by 5.60 points, separating visual appeal from structural fidelity. Attribute-specific, CLIP-sensitivity, reward-weight, and parser-perturbation ablations appear in Appendices~\ref{app:attribute_ablation}, \ref{app:clip_sensitivity}, \ref{app:reward_sensitivity}, and~\ref{app:dse_stress}, respectively.

\paragraph{Effect of constraint complexity.}
\begin{figure*}[htb]
\centering
\includegraphics[width=0.96\linewidth]{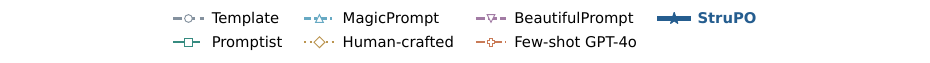}
\par
\begin{subfigure}[t]{0.485\textwidth}
    \centering
    \includegraphics[width=\linewidth]{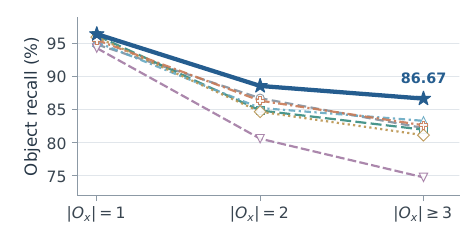}
    \caption{Object complexity.}
    \label{fig:object_complexity}
\end{subfigure}
\hfill
\begin{subfigure}[t]{0.485\textwidth}
    \centering
    \includegraphics[width=\linewidth]{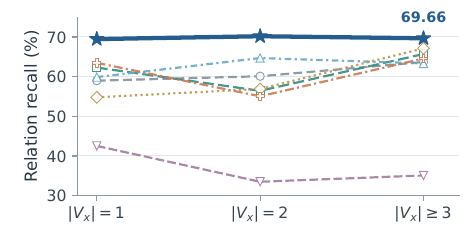}
    \caption{Relation complexity.}
    \label{fig:relation_complexity}
\end{subfigure}
\caption{Performance by constraint complexity on Lexica. Object and relation counts define separate partitions; methods share prompts and image seeds within each bucket.}
\label{fig:constraint_complexity}
\end{figure*}

Figure~\ref{fig:constraint_complexity} evaluates robustness to increasingly dense semantic constraints. In panel~\subref{fig:object_complexity}, OR declines for every method as the object count grows, but \ours{} shows the smallest degradation. Panel~\subref{fig:relation_complexity} is more revealing: generic rewriting methods fluctuate substantially as relation count changes, whereas \ours{} remains close to 70\% RR across all three buckets. The larger human preference margins in the high-relation bucket (Table~\ref{tab:human_relation_complexity}) provide a perceptual counterpart to this structural robustness.

\paragraph{Additional analyses.}
Matched learned and iterative baselines test the policy against alternative ways of using image feedback. Attribute/CLIP and reward/DSE diagnostics examine the structural pathway, while runtime and failure analyses characterize deployment trade-offs (Appendices~\ref{app:external_metrics}--\ref{app:failure_cases}).

\subsection{Qualitative Analysis}
\begin{figure*}[htb]
\centering
\includegraphics[width=0.90\textwidth]{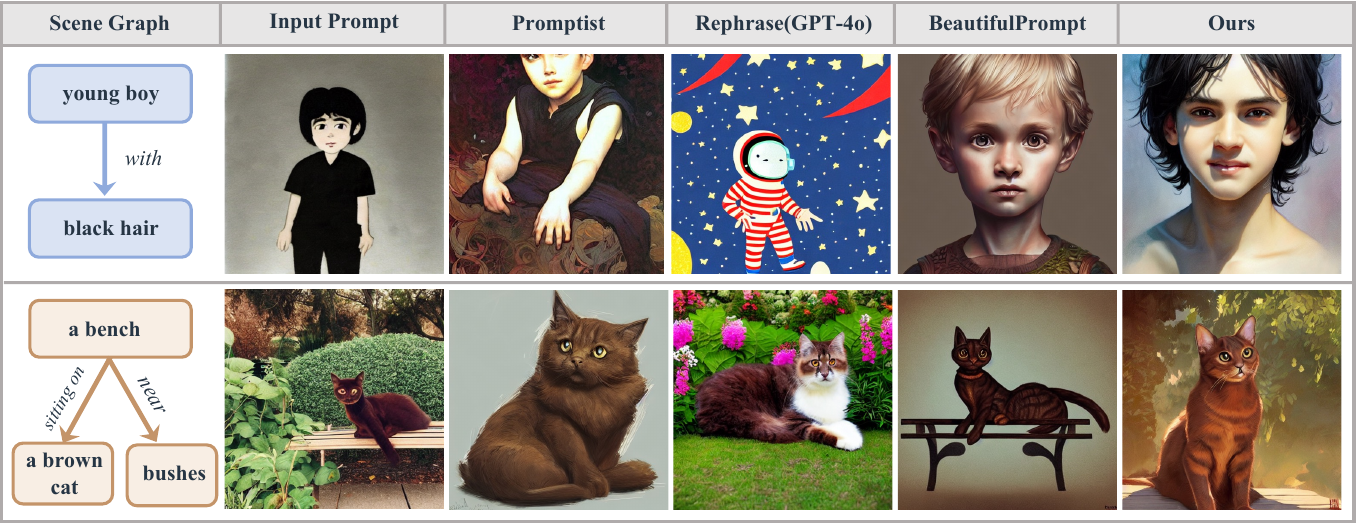}
\caption{Representative SD v1.5 comparisons. \ours{} better preserves attribute-bearing phrases and object--relation constraints while maintaining visual quality.}
\label{fig:image_case}
\end{figure*}

Figure~\ref{fig:image_case} illustrates the semantic differences reflected in the aggregate scores. In the first case, several baselines omit or alter ``black hair,'' either changing the attribute or the subject itself. In the second, baselines miss an object or fail to place the objects in the requested relation. The resulting images may remain individually plausible and topically relevant, yet they are not equally faithful to the user's structured intent. The graphs make these differences explicit: modifier attachment and relation direction remain identifiable even when all images depict a plausible scene. In these cases, \ours{} retains the relevant phrases and relations while adding visual detail. The examples complement the ablations by showing the kinds of local semantic errors that aggregate object recall alone can obscure.

\FloatBarrier

\section{Conclusions and Limitations}
\ours{} learns faithful prompt editing by sharing calibrated structural intent between policy conditioning and image-grounded supervision. Local edit actions preserve a token-level connection to the user's description, while confidence-preserving rewards and environment-aware GRPO make noisy structural feedback usable for optimization. Across prompt distributions and frozen T2I backbones, the resulting policy improves the faithfulness--aesthetics trade-off, with the most consistent gains in relations. Controlled ablations connect these gains to the graph, action representation, and image feedback; independent evaluators and human judgments provide complementary evidence. The broader lesson is that structural feedback can shape how a prompt policy learns, not merely how its outputs are scored. After graph calibration, the editor returns reusable optimized text in one pass, linking relation-sensitive quality to a practical frozen-generator interface.

\paragraph{Limitations.}
The approach depends on parser, grounding, and generator quality. Attribute-bearing phrases capture local modifiers, but object--relation graphs are less expressive for counting, occlusion, negation, emotion, metaphor, and other abstract intent. Conservative editing can also be too restrictive when a prompt needs global reformulation; the long-input and dense-scene analyses expose these limits. Multi-image averaging increases offline training cost, and inference includes an initial calibration render in addition to the final image. Transfer does not remove the capabilities or biases of the underlying generator, and the studied English prompt distributions do not establish multilingual coverage. Richer constraint representations and adaptive editing strategies are promising next steps, alongside broader human evaluation of which semantic differences matter in practice.

\begin{ack}
This work was also supported by the Noncommunicable Chronic Diseases-National
Science and Technology Major Project (Grant No.~2026ZD0558202).
Haobo Wang is also supported by the Ningbo Yongjiang Talent Introduction
Programme (No.~2023A-399-G).
\end{ack}


\bibliographystyle{unsrtnat}
\bibliography{reference}

@inproceedings{stablediffusion,
  title={High-resolution image synthesis with latent diffusion models},
  author={Rombach, Robin and Blattmann, Andreas and Lorenz, Dominik and Esser, Patrick and Ommer, Bj{\"o}rn},
  booktitle={Proceedings of the IEEE/CVF conference on computer vision and pattern recognition},
  pages={10684--10695},
  year={2022}
}

@article{dalle3,
  title={Improving image generation with better captions},
  author={Betker, James and Goh, Gabriel and Jing, Li and Brooks, Tim and Wang, Jianfeng and Li, Linjie and Ouyang, Long and Zhuang, Juntang and Lee, Joyce and Guo, Yufei and others},
  journal={Computer Science. https://cdn. openai. com/papers/dall-e-3. pdf},
  volume={2},
  number={3},
  year={2023}
}

@inproceedings{promptist,
  title={Optimizing Prompts for Text-to-Image Generation},
  author={Hao, Yaru and Chi, Zewen and Dong, Li and Wei, Furu},
  booktitle={Advances in Neural Information Processing Systems},
  volume={36},
  pages={66923--66939},
  year={2023}
}

@article{neuroprompts,
  title={NeuroPrompts: An Adaptive Framework to Optimize Prompts for Text-to-Image Generation},
  author={Rosenman, Shachar and Lal, Vasudev and Howard, Phillip},
  journal={arXiv preprint arXiv:2311.12229},
  year={2023}
}

@article{beautifulprompt,
  title={BeautifulPrompt: Towards Automatic Prompt Engineering for Text-to-Image Synthesis},
  author={Cao, Tingfeng and Wang, Chengyu and Liu, Bingyan and Wu, Ziheng and Zhu, Jinhui and Huang, Jun},
  journal={arXiv preprint arXiv:2311.06752},
  year={2023}
}

@article{coco,
  title={Microsoft coco captions: Data collection and evaluation server},
  author={Chen, Xinlei and Fang, Hao and Lin, Tsung-Yi and Vedantam, Ramakrishna and Gupta, Saurabh and Doll{\'a}r, Piotr and Zitnick, C Lawrence},
  journal={arXiv preprint arXiv:1504.00325},
  year={2015}
}

@inproceedings{prompt-ana,
  title={A Prompt Log Analysis of Text-to-Image Generation Systems},
  author={Xie, Yutong and Pan, Zhaoying and Ma, Jinge and Jie, Luo and Mei, Qiaozhu},
  booktitle={Proceedings of the ACM Web Conference 2023},
  pages={3892--3902},
  year={2023}
}

@inproceedings{design-guildline,
  title={Design guidelines for prompt engineering text-to-image generative models},
  author={Liu, Vivian and Chilton, Lydia B},
  booktitle={Proceedings of the 2022 CHI Conference on Human Factors in Computing Systems},
  pages={1--23},
  year={2022}
}

@article{white-box1,
  title={Hard prompts made easy: Gradient-based discrete optimization for prompt tuning and discovery},
  author={Wen, Yuxin and Jain, Neel and Kirchenbauer, John and Goldblum, Micah and Geiping, Jonas and Goldstein, Tom},
  journal={Advances in Neural Information Processing Systems},
  volume={36},
  year={2024}
}

@article{white-box2,
  title={Manipulating Embeddings of Stable Diffusion Prompts},
  author={Deckers, Niklas and Peters, Julia and Potthast, Martin},
  journal={arXiv preprint arXiv:2308.12059},
  year={2023}
}

@article{chatgpt1,
  title={Could ChatGPT Imagine: Content Control for Artistic Painting Generation Via Large Language Models},
  author={Lu, Yue and Guo, Chao and Dou, Yong and Dai, Xingyuan and Wang, Fei-Yue},
  journal={Journal of Intelligent \& Robotic Systems},
  volume={109},
  number={2},
  pages={39},
  year={2023},
  publisher={Springer}
}

@inproceedings{language-supervised-sgg1,
  title={Learning To Generate Language-Supervised and Open-Vocabulary Scene Graph Using Pre-Trained Visual-Semantic Space},
  author={Zhang, Yong and Pan, Yingwei and Yao, Ting and Huang, Rui and Mei, Tao and Chen, Chang-Wen},
  booktitle={Proceedings of the IEEE/CVF Conference on Computer Vision and Pattern Recognition},
  pages={2915--2924},
  year={2023}
}

@inproceedings{glip,
  title={Grounded language-image pre-training},
  author={Li, Liunian Harold and Zhang, Pengchuan and Zhang, Haotian and Yang, Jianwei and Li, Chunyuan and Zhong, Yiwu and Wang, Lijuan and Yuan, Lu and Zhang, Lei and Hwang, Jenq-Neng and others},
  booktitle={Proceedings of the IEEE/CVF Conference on Computer Vision and Pattern Recognition},
  pages={10965--10975},
  year={2022}
}

@inproceedings{blip,
  title={Blip: Bootstrapping language-image pre-training for unified vision-language understanding and generation},
  author={Li, Junnan and Li, Dongxu and Xiong, Caiming and Hoi, Steven},
  booktitle={International Conference on Machine Learning},
  pages={12888--12900},
  year={2022},
  organization={PMLR}
}

@article{semi-edit,
  title={Edit5: Semi-autoregressive text-editing with t5 warm-start},
  author={Mallinson, Jonathan and Adamek, Jakub and Malmi, Eric and Severyn, Aliaksei},
  journal={arXiv preprint arXiv:2205.12209},
  year={2022}
}

@article{felix,
  title={FELIX: Flexible text editing through tagging and insertion},
  author={Mallinson, Jonathan and Severyn, Aliaksei and Malmi, Eric and Garrido, Guillermo},
  journal={arXiv preprint arXiv:2003.10687},
  year={2020}
}

@article{diffusiondb,
  title={Diffusiondb: A large-scale prompt gallery dataset for text-to-image generative models},
  author={Wang, Zijie J and Montoya, Evan and Munechika, David and Yang, Haoyang and Hoover, Benjamin and Chau, Duen Horng},
  journal={arXiv preprint arXiv:2210.14896},
  year={2022}
}

@inproceedings{clip,
  title={Learning transferable visual models from natural language supervision},
  author={Radford, Alec and Kim, Jong Wook and Hallacy, Chris and Ramesh, Aditya and Goh, Gabriel and Agarwal, Sandhini and Sastry, Girish and Askell, Amanda and Mishkin, Pamela and Clark, Jack and others},
  booktitle={International conference on machine learning},
  pages={8748--8763},
  year={2021},
  organization={PMLR}
}

@inproceedings{bestprompt,
  title={Best prompts for text-to-image models and how to find them},
  author={Pavlichenko, Nikita and Ustalov, Dmitry},
  booktitle={Proceedings of the 46th International ACM SIGIR Conference on Research and Development in Information Retrieval},
  pages={2067--2071},
  year={2023}
}

@misc{MagicPrompt,
    year = {2023},
    url = {https://github.com/XpycT/MagicPrompt},
    title = {MagicPrompt}
}

@misc{Lexica,
    year = {2022},
    url ={https://huggingface.co/datasets/Gustavosta/Stable-Diffusion-Prompts},
    title = {Lexica}
}

@inproceedings{clipbad2,
  title={CREPE: Can Vision-Language Foundation Models Reason Compositionally?},
  author={Ma, Zixian and Hong, Jerry and Gul, Mustafa Omer and Gandhi, Mona and Gao, Irena and Krishna, Ranjay},
  booktitle={Proceedings of the IEEE/CVF Conference on Computer Vision and Pattern Recognition},
  pages={10910--10921},
  year={2023}
}

@inproceedings{pr1,
title={Distill The Image to Nowhere: Inversion Knowledge Distillation for Multimodal Machine Translation},
author={Peng, Ru and Zeng, Yawen and Zhao, Jake},
booktitle={Proceedings of the 2022 Conference on Empirical Methods in Natural Language Processing},
pages={2379--2390},
year={2022}
}

@inproceedings{pr2,
title={HybridVocab: Towards Multi-Modal Machine Translation via Multi-Aspect Alignment},
author={Peng, Ru and Zeng, Yawen and Zhao, Junbo},
booktitle={Proceedings of the 2022 International Conference on Multimedia Retrieval},
pages={380--388},
year={2022}
}

@inproceedings{metric:imagereward,
  title={ImageReward: Learning and Evaluating Human Preferences for Text-to-Image Generation},
  author={Xu, Jiazheng and Liu, Xiao and Wu, Yuchen and Tong, Yuxuan and Li, Qinkai and Ding, Ming and Tang, Jie and Dong, Yuxiao},
  booktitle={Advances in Neural Information Processing Systems},
  volume={36},
  pages={15903--15935},
  year={2023}
}

@inproceedings{lewis-etal-2020-bart,
    title = "{BART}: Denoising Sequence-to-Sequence Pre-training for Natural Language Generation, Translation, and Comprehension",
    author = "Lewis, Mike  and
      Liu, Yinhan  and
      Goyal, Naman  and
      Ghazvininejad, Marjan  and
      Mohamed, Abdelrahman  and
      Levy, Omer  and
      Stoyanov, Veselin  and
      Zettlemoyer, Luke",
    editor = "Jurafsky, Dan  and
      Chai, Joyce  and
      Schluter, Natalie  and
      Tetreault, Joel",
    booktitle = "Proceedings of the 58th Annual Meeting of the Association for Computational Linguistics",
    month = jul,
    year = "2020",
    address = "Online",
    publisher = "Association for Computational Linguistics",
    url = "https://aclanthology.org/2020.acl-main.703",
    doi = "10.18653/v1/2020.acl-main.703",
    pages = "7871--7880",
}

@inproceedings{visualprompter,
  title={VisualPrompter: Semantic-Aware Prompt Optimization with Visual Feedback for Text-to-Image Synthesis},
  author={Wu, Shiyu and Sun, Mingzhen and Wang, Weining and Wang, Yequan and Liu, Jing},
  booktitle={International Conference on Learning Representations},
  year={2026}
}

@inproceedings{contextual_bandit,
  title={Contextual bandits with linear payoff functions},
  author={Chu, Wei and Li, Lihong and Reyzin, Lev and Schapire, Robert},
  booktitle={Proceedings of the fourteenth international conference on artificial intelligence and statistics},
  pages={208--214},
  year={2011},
  organization={JMLR Workshop and Conference Proceedings}
}

@misc{grpo,
      title={DeepSeekMath: Pushing the Limits of Mathematical Reasoning in Open Language Models}, 
      author={Zhihong Shao and Peiyi Wang and Qihao Zhu and Runxin Xu and Junxiao Song and Xiao Bi and Haowei Zhang and Mingchuan Zhang and Y. K. Li and Y. Wu and Daya Guo},
      year={2024},
      eprint={2402.03300},
      archivePrefix={arXiv},
      primaryClass={cs.CL},
      url={https://arxiv.org/abs/2402.03300}, 
}

@inproceedings{adamw,
    title={Decoupled Weight Decay Regularization},
    author={Ilya Loshchilov and Frank Hutter},
    booktitle={International Conference on Learning Representations},
    year={2019},
    url={https://openreview.net/forum?id=Bkg6RiCqY7},
}

@article{hpsv2,
  title={Human preference score v2: A solid benchmark for evaluating human preferences of text-to-image synthesis},
  author={Wu, Xiaoshi and Hao, Yiming and Sun, Keqiang and Chen, Yixiong and Zhu, Feng and Zhao, Rui and Li, Hongsheng},
  journal={arXiv preprint arXiv:2306.09341},
  year={2023}
}

@article{manasimproving,
  title={Improving Text-to-Image Consistency via Automatic Prompt Optimization},
  author={Ma{\~n}as, Oscar and Astolfi, Pietro and Hall, Melissa and Ross, Candace and Urbanek, Jack and Williams, Adina and Agrawal, Aishwarya and Romero-Soriano, Adriana and Drozdzal, Michal},
  journal={Transactions on Machine Learning Research},
  year={2024}
}

@article{scene-graph-survey,
  title={Scene graph generation: A comprehensive survey},
  author={Li, Hongsheng and Zhu, Guangming and Zhang, Liang and Jiang, Youliang and Dang, Yixuan and Hou, Haoran and Shen, Peiyi and Zhao, Xia and Shah, Syed Afaq Ali and Bennamoun, Mohammed},
  journal={Neurocomputing},
  volume={566},
  pages={127052},
  year={2024},
  publisher={Elsevier}
}

@article{visualgenome,
  title={Visual genome: Connecting language and vision using crowdsourced dense image annotations},
  author={Krishna, Ranjay and Zhu, Yuke and Groth, Oliver and Johnson, Justin and Hata, Kenji and Kravitz, Joshua and Chen, Stephanie and Kalantidis, Yannis and Li, Li-Jia and Shamma, David A and others},
  journal={International journal of computer vision},
  volume={123},
  number={1},
  pages={32--73},
  year={2017},
  publisher={Springer}
}

@inproceedings{impp,
  title={Scene graph generation by iterative message passing},
  author={Xu, Danfei and Zhu, Yuke and Choy, Christopher B and Fei-Fei, Li},
  booktitle={Proceedings of the IEEE conference on computer vision and pattern recognition},
  pages={5410--5419},
  year={2017}
}

@inproceedings{neuralmotifs,
  title={Neural motifs: Scene graph parsing with global context},
  author={Zellers, Rowan and Yatskar, Mark and Thomson, Sam and Choi, Yejin},
  booktitle={Proceedings of the IEEE conference on computer vision and pattern recognition},
  pages={5831--5840},
  year={2018}
}

@inproceedings{vctree,
  title={Learning to compose dynamic tree structures for visual contexts},
  author={Tang, Kaihua and Zhang, Hanwang and Wu, Baoyuan and Luo, Wenhan and Liu, Wei},
  booktitle={Proceedings of the IEEE/CVF conference on computer vision and pattern recognition},
  pages={6619--6628},
  year={2019}
}

@inproceedings{sgtr,
  title={Sgtr: End-to-end scene graph generation with transformer},
  author={Li, Rongjie and Zhang, Songyang and He, Xuming},
  booktitle={proceedings of the IEEE/CVF conference on computer vision and pattern recognition},
  pages={19486--19496},
  year={2022}
}

@inproceedings{dsg,
  author={Jaemin Cho and Yushi Hu and Jason M. Baldridge and Roopal Garg and Peter Anderson and Ranjay Krishna and Mohit Bansal and Jordi Pont-Tuset and Su Wang},
  title={Davidsonian Scene Graph: Improving Reliability in Fine-grained Evaluation for Text-to-Image Generation},
  year={2024},
  cdate={1704067200000},
  url={https://openreview.net/forum?id=ITq4ZRUT4a},
  booktitle={International Conference on Learning Representations}
}

@inproceedings{tifa,
  title={TIFA: Accurate and Interpretable Text-to-Image Faithfulness Evaluation with Question Answering},
  author={Hu, Yushi and Liu, Benlin and Kasai, Jungo and Wang, Yizhong and Ostendorf, Mari and Krishna, Ranjay and Smith, Noah A.},
  booktitle={Proceedings of the IEEE/CVF International Conference on Computer Vision},
  pages={24498--24508},
  year={2023}
}

@inproceedings{vqascore,
  title={Evaluating Text-to-Visual Generation with Image-to-Text Generation},
  author={Lin, Zhiqiu and Pathak, Deepak and Li, Baiqi and Li, Jiayao and Xia, Xide and Neubig, Graham and Zhang, Pengchuan and Ramanan, Deva},
  booktitle={European Conference on Computer Vision},
  pages={366--384},
  year={2024},
  organization={Springer}
}

@inproceedings{pae,
  title={Dynamic Prompt Optimizing for Text-to-Image Generation},
  author={Mo, Wenyi and Zhang, Tianyu and Bai, Yalong and Su, Bing and Wen, Ji-Rong and Yang, Qing},
  booktitle={Proceedings of the IEEE/CVF Conference on Computer Vision and Pattern Recognition},
  pages={26627--26636},
  year={2024}
}

@inproceedings{reprompt,
  title={RePrompt: Reasoning-Augmented Reprompting for Text-to-Image Generation via Reinforcement Learning},
  author={Wu, Mingrui and Wang, Lu and Zhao, Pu and Yang, Fangkai and Zhang, Jianjin and Liu, Jianfeng and Zhan, Yuefeng and Han, Weihao and Sun, Hao and Ji, Jiayi and Sun, Xiaoshuai and Lin, Qingwei and Deng, Weiwei and Zhang, Dongmei and Sun, Feng and Ji, Rongrong},
  booktitle={International Conference on Learning Representations},
  year={2026}
}

@inproceedings{dpok,
  title={DPOK: Reinforcement Learning for Fine-tuning Text-to-Image Diffusion Models},
  author={Fan, Ying and Watkins, Olivia and Du, Yuqing and Liu, Hao and Ryu, Moonkyung and Boutilier, Craig and Abbeel, Pieter and Ghavamzadeh, Mohammad and Lee, Kangwook and Lee, Kimin},
  booktitle={Advances in Neural Information Processing Systems},
  volume={36},
  pages={79858--79885},
  year={2023}
}

@article{attendandexcite,
  title={Attend-and-Excite: Attention-Based Semantic Guidance for Text-to-Image Diffusion Models},
  author={Chefer, Hila and Alaluf, Yuval and Vinker, Yael and Wolf, Lior and Cohen-Or, Daniel},
  journal={ACM Transactions on Graphics},
  volume={42},
  number={4},
  pages={148:1--148:10},
  doi={10.1145/3592116},
  year={2023}
}

@article{t2icompbench,
  title={T2I-CompBench: A Comprehensive Benchmark for Open-World Compositional Text-to-Image Generation},
  author={Huang, Kaiyi and Sun, Kaiyue and Xie, Enze and Li, Zhenguo and Liu, Xihui},
  journal={Advances in Neural Information Processing Systems},
  volume={36},
  pages={78723--78747},
  year={2023}
}

\clearpage
\appendix
\tableofcontents
\newpage

\section{Pipeline and Data Construction}
\label{appendix:1}
This section specifies how prompt pairs become supervised edit targets and how image feedback subsequently trains the policy. Appendices~\ref{app:theory} and~\ref{app:implementation} provide the reward analysis and implementation settings, respectively.

\subsection{Data Collection}
\label{appendix:data_collection}
To construct the SFT pairs, we start from each optimized prompt and derive two input variants: its semantic skeleton and a GPT-4o rewrite of that skeleton. Table~\ref{st_case} gives one example. The optimized-prompt source is the public Lexica-derived collection\footnote{\url{https://huggingface.co/datasets/Gustavosta/Stable-Diffusion-Prompts}}.

\begin{table}[htb]
\centering
        \caption{Example construction of an input--target pair from an optimized prompt.}
\resizebox{\linewidth}{!}{
        \begin{tabular}{lp{14cm}}
            \toprule
             \textbf{Optimized Prompt} & a fashion catwalk on a fishing boat, fashion photography, concept art, octane, redshift, high detail\\
             \midrule
            \textbf{Semantic Skeleton} & a fashion catwalk on a fishing boat\\
            \midrule
            \textbf{Rewriting} & on a fishing boat, there was a fashion catwalk \\
            \bottomrule
            \end{tabular}
            }
        
    \label{st_case}
\end{table}

\subsection{Supervised Fine-Tuning}
For supervised fine-tuning, we derive edit-action targets from each mapping $x\rightarrow y$ with a greedy token alignment that favors a long shared subsequence. The target action sequence is then constructed in three ordered steps:
\begin{enumerate}
    \item \textit{Deletion}: Iterate through the token list of the raw prompt $x$ and mark the tokens that do not appear in the alignment result to identify the deletion actions. 
    \item \textit{Reordering}: Record the mapping of tokens between their original positions and target positions during the alignment process to parse the reordering actions.
    \item \textit{Insertion}: Compare the token list of the optimized prompt to identify tokens that do not appear in the alignment result, and determine their target insertion positions to parse the insertion actions.
\end{enumerate}

\subsection{Edit-Action Illustration}
\label{app:edit_actions}
Figure~\ref{fig:edit_action_appendix} illustrates the ordered action space used by EAG. The editor first removes tokens that are not retained, then predicts successor links among the remaining tokens, and finally inserts new tokens into the resulting slots. The resulting trace links retained input tokens to their positions in the optimized prompt and records the inserted material.

\begin{figure}[htb]
\centering
\includegraphics[width=0.9\linewidth]{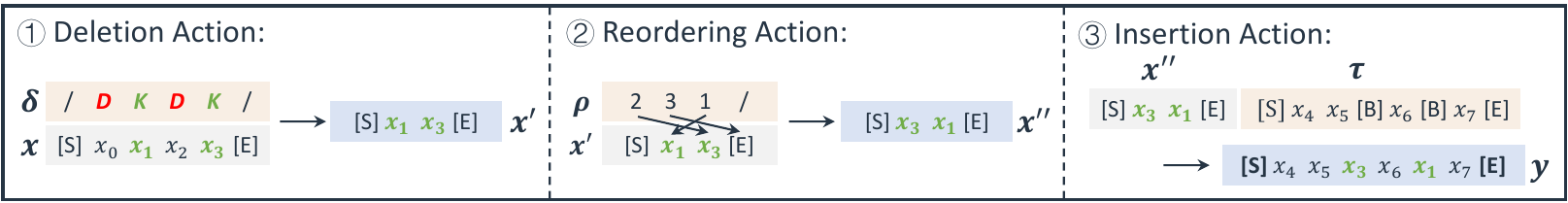}
\caption{Deletion, reordering, and insertion actions used by the edit-action generator.}
\label{fig:edit_action_appendix}
\end{figure}

\subsection{Reinforcement Learning}
\label{appendix:rl}
Algorithm~\ref{algorithm:rl} gives detailed pseudocode for environment-aware GRPO.
\begin{algorithm}[htb]
\caption{Environment-Aware GRPO for the Prompt Policy}
\label{algorithm:rl}
\begin{algorithmic}[1]
\Require SFT policy $\pi_{\theta}$, reference policy $\pi_{\mathrm{ref}}$, RL prompt pool $\mathcal{D}'$, frozen T2I model $g$, reward $r$.
\State \textbf{Hyperparameters:} updates $T$, minibatch size $B$, group size $N$, environment samples $M$, policy epochs $E$, clip $\epsilon$, KL weight $\beta$, entropy weight $\lambda$.
\For{$t=1$ to $T$}
    \State Sample a prompt minibatch $\mathcal{B}_x\subset\mathcal{D}'$ with $|\mathcal{B}_x|=B$; set $\theta_{\mathrm{old}}\leftarrow\theta$.
    \ForAll{$x\in\mathcal{B}_x$}
        \State Extract/calibrate $G_x$ and sample $N$ actions $a_{x,i}\sim\pi_{\theta_{\mathrm{old}}}(\cdot\mid x,G_x)$, yielding prompts $y_{x,i}$.
        \For{$i=1$ to $N$}
            \For{$j=1$ to $M$}
                \State Sample $I_{x,i,j}\sim g(y_{x,i})$ and compute $r_{x,i,j}=r(x,I_{x,i,j})$.
            \EndFor
            \State $\bar r_{x,i}\leftarrow M^{-1}\sum_{j=1}^{M}r_{x,i,j}$.
        \EndFor
        \State $\widehat A_{x,i}\leftarrow\dfrac{\bar r_{x,i}-\operatorname{mean}_{k}\bar r_{x,k}}{\operatorname{std}_{k}\bar r_{x,k}+\epsilon_A}$.
    \EndFor
    \For{$e=1$ to $E$}
        \State Compute $w_{x,i}=\pi_{\theta}(a_{x,i}\mid x,G_x)/\pi_{\theta_{\mathrm{old}}}(a_{x,i}\mid x,G_x)$ for all $x,i$.
        \State Update $\theta$ by ascending the minibatch-averaged clipped GRPO objective with KL and entropy regularization.
    \EndFor
\EndFor
\State \Return optimized policy $\pi_{\theta}$.
\end{algorithmic}
\end{algorithm}

\section{Theoretical Analysis}
\label{app:theory}

We provide the proof of Theorem~\ref{theorem:smooth_reward} here. For a fixed context $(x,G_x)$, let a rollout observation be $I=(a,i_y)$, where action $a$ deterministically produces prompt $y$ and $i_y\sim g(y)$. Define the grounding-confidence vector
\begin{align}
p(I) \coloneqq \big[P(z_1\mid I),\dots,P(z_K\mid I)\big]\in[0,1]^K,
\end{align}
where $\{z_k\}_{k=1}^K$ denotes the set of $K$ atomic semantic items (objects or relations) in the scene graph.
Define the discrete reward and the smooth reward, respectively, as
\begin{align}
&R_d(I) \coloneqq \frac{1}{K}\sum_{k=1}^K \mathbf{1}\!\big[P(z_k\mid I)\ge u\big],\\
&R_s(I) \coloneqq \frac{1}{K}\sum_{k=1}^K P(z_k\mid I),
\end{align}
where $u\in(0,1)$ is a fixed threshold. The single-sample policy gradient estimator is
\begin{align}
&\hat{\nabla}J(\theta;I) \coloneqq \Psi(\theta;I)\,R(I),
\\
&\Psi(\theta;I)\coloneqq \nabla_\theta \log \pi_\theta(a\mid x,G_x).
\end{align}
Throughout, $\|\cdot\|$ denotes the Euclidean norm and $\|\cdot\|_1$ denotes the $\ell_1$ norm.

\begin{theorem}
\label{thm:appendix_smooth_reward_proof}
Let $I$ be the observed information, and $p(I)=\big[P(z_1\mid I),\dots,P(z_K\mid I)\big]\in[0,1]^K$.
Define the discrete reward $R_d(I)=\frac{1}{K}\sum_{k=1}^K \mathbf{1}[P(z_k\mid I)\ge u]$
and the smooth reward $R_s(I)=\frac{1}{K}\sum_{k=1}^K P(z_k\mid I)$.
The policy gradient estimator is $\hat{\nabla}J(\theta;I)=\Psi(\theta;I)R(I)$,
where $\Psi(\theta;I)=\nabla_\theta\log\pi_\theta(a\mid x,G_x)$ for the action associated with rollout $I$.
When $R = R_s$, the gradient fluctuation admits the following upper bound:
\begin{align}
\label{eq:appendix_grad_fluctuation_bound}
\big\|\hat{\nabla}J(\theta; I) - \hat{\nabla}J(\theta; I')\big\|
\le
\big\|\Psi(\theta;I)-\Psi(\theta;I')\big\|
+\\
\big\|\Psi(\theta;I)\big\|\frac{1}{K}\big\|p(I)-p(I')\big\|_1.
\end{align}
In contrast, when $R = R_d$, such a Lipschitz-type bound generally does not hold.
\end{theorem}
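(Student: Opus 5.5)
The bound follows from an add-and-subtract decomposition of the estimator difference, combined with two elementary facts about $R_s$: it takes values in $[0,1]$, and it is $\tfrac1K$-Lipschitz in $p$ with respect to $\|\cdot\|_1$. Writing $\Psi=\Psi(\theta;I)$ and $\Psi'=\Psi(\theta;I')$, I would insert the cross term $\Psi R_s(I')$:
\begin{align*}
\hat{\nabla}J(\theta;I)-\hat{\nabla}J(\theta;I')
=\Psi\big(R_s(I)-R_s(I')\big)+\big(\Psi-\Psi'\big)R_s(I').
\end{align*}
The triangle inequality and homogeneity of the Euclidean norm then give
\begin{align*}
\|\hat{\nabla}J(\theta;I)-\hat{\nabla}J(\theta;I')\|
\le \|\Psi\|\,|R_s(I)-R_s(I')|+\|\Psi-\Psi'\|\,|R_s(I')|.
\end{align*}

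Next I would bound the two scalar factors. Since every $p_k(I')\in[0,1]$, the average satisfies $0\le R_s(I')\le 1$, so the second term is at most $\|\Psi-\Psi'\|$. For the first factor,
\begin{align*}
|R_s(I)-R_s(I')|=\frac1K\Big|\sum_{k}\big(p_k(I)-p_k(I')\big)\Big|\le\frac1K\|p(I)-p(I')\|_1 .
\end{align*}
Substituting both bounds yields exactly \eqref{eq:appendix_grad_fluctuation_bound}. For a fixed edit ($a=a'$) the first term vanishes, which recovers the remark in the main text. The same argument applies verbatim to the clipped-log reward $\phi$ once $\tfrac1K$ is replaced by $L_\phi/K$, where $L_\phi\le 1/\eta$, and the factor $1$ is replaced by $\max(|c_{\min}|,|c_{\max}|)$.

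For the negative claim about $R_d$, I would exhibit rollouts that share the same action $a$ but have different renders. Then $\Psi=\Psi'$, so any bound of the form $C_1\|\Psi-\Psi'\|+C_2\|\Psi\|\,\|p-p'\|_1$ reduces to $C_2\|\Psi\|\,\|p-p'\|_1$. Take $p(I)$ with $p_1(I)=u$ and $p(I'_n)$ identical except $p_1(I'_n)=u-1/n$. Then $\|p(I)-p(I'_n)\|_1=1/n\to0$, while $R_d(I)-R_d(I'_n)=1/K$, so the gradient difference equals $\|\Psi\|/K$ for every $n$. Whenever $\Psi\neq0$, no finite constant $C_2$ can hold uniformly, which shows that the indicator has no finite Lipschitz constant at $u$.

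The main obstacle is interpretive rather than technical: the counterexample needs $\Psi(\theta;I)\neq0$ and requires confidence values near the threshold to actually be attainable by renders of the same edit. I would state these explicitly as the conditions behind ``generally.'' In particular, I would assume a non-degenerate policy and a grounding model whose confidence varies continuously across renders, so that $P(z_1\mid I)$ takes values arbitrarily close to $u$ from both sides.
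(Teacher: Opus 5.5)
Your proposal is correct and matches the paper's proof step for step: the same cross term $\Psi(\theta;I)R_s(I')$, the bound $|R_s(I')|\le 1$, the $\tfrac1K$-Lipschitz estimate in $\ell_1$, and a counterexample for $R_d$ whose confidences straddle the threshold. The only difference is cosmetic: you fix the action so that $\Psi=\Psi'$ and argue directly at the gradient level for general $K$. The paper instead takes $K=1$ with confidences $u\pm\varepsilon$, argues at the reward level, and then notes that a fixed nonzero score function carries the discontinuity into the gradient.
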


\begin{proof}
\textbf{Step 1: A Lipschitz bound for the smooth reward.}
For any $I,I'$, we first show that
\begin{align}
\label{eq:appendix_Rs_lipschitz}
\big|R_s(I)-R_s(I')\big|
\le
\frac{1}{K}\big\|p(I)-p(I')\big\|_1.
\end{align}
Indeed, by definition,
\begin{align}
R_s(I)-R_s(I')
=
\frac{1}{K}\sum_{k=1}^K \Big(P(z_k\mid I)-P(z_k\mid I')\Big),
\end{align}
hence by the triangle inequality,
\begin{align}
\big|R_s(I)-R_s(I')\big|
&\le
\frac{1}{K}\sum_{k=1}^K \Big|P(z_k\mid I)-P(z_k\mid I')\Big|
\\&=
\frac{1}{K}\big\|p(I)-p(I')\big\|_1,
\end{align}
which proves~\eqref{eq:appendix_Rs_lipschitz}.

\textbf{Step 2: Decomposition of gradient difference.}
Let $R=R_s$ and abbreviate $\Psi(I)\coloneqq \Psi(\theta;I)$.
Then
\begin{align}
\hat{\nabla}J(\theta;I)-\hat{\nabla}J(\theta;I')
=
\Psi(I)R_s(I)-\Psi(I')R_s(I').
\end{align}
Add and subtract $\Psi(I)R_s(I')$ to obtain
\begin{align}
\label{eq:appendix_split}
\Psi(I)R_s(I)-\Psi(I')R_s(I')
=
\big(\Psi(I)-\Psi(I')\big)R_s(I')
\\+
\Psi(I)\big(R_s(I)-R_s(I')\big).
\end{align}
Taking norms and applying the triangle inequality yields
\begin{align}
\label{eq:appendix_triangle}
\big\|\hat{\nabla}J(\theta;I)-\hat{\nabla}J(\theta;I')\big\|
\le
\big\|\Psi(I)-\Psi(I')\big\|\,|R_s(I')|
\\+
\|\Psi(I)\|\,\big|R_s(I)-R_s(I')\big|.
\end{align}

\textbf{Step 3: Bounding each term.}
Since $P(z_k\mid I')\in[0,1]$ for all $k$, we have $R_s(I')\in[0,1]$, hence $|R_s(I')|\le 1$.
Combining this with~\eqref{eq:appendix_Rs_lipschitz} in~\eqref{eq:appendix_triangle}, we obtain
\begin{align}
\big\|\hat{\nabla}J(\theta;I)-\hat{\nabla}J(\theta;I')\big\|
\le
\big\|\Psi(I)-\Psi(I')\big\|
\\+
\|\Psi(I)\|\cdot \frac{1}{K}\big\|p(I)-p(I')\big\|_1,
\end{align}
which is exactly~\eqref{eq:appendix_grad_fluctuation_bound} and proves the first statement.

\textbf{Step 4: Failure of a Lipschitz-type bound for the discrete reward.}
We provide a rigorous counterexample to show that a uniform Lipschitz-type inequality generally cannot hold
for $R_d$.
Consider the case $K=1$, where $R_d(I)=\mathbf{1}[P(z_1\mid I)\ge u]$.
For any $0<\varepsilon<\min\{u,1-u\}$, choose two observations $I,I'$ such that
\begin{align}
P(z_1\mid I)=u-\varepsilon,
\\
P(z_1\mid I')=u+\varepsilon.
\end{align}
Then
\begin{align}
R_d(I)=0, R_d(I')=1,
\\\Rightarrow
|R_d(I)-R_d(I')|=1,
\end{align}
while
\begin{align}
\|p(I)-p(I')\|_1=\big|P(z_1\mid I)-P(z_1\mid I')\big|=2\varepsilon.
\end{align}
If there were a finite constant $L$ such that
$|R_d(I)-R_d(I')|\le L\|p(I)-p(I')\|_1$ held for all $I,I'$, we would have
\begin{align}
1 \le L\cdot 2\varepsilon
\Rightarrow
L\ge \frac{1}{2\varepsilon}.
\end{align}
Letting $\varepsilon\to 0^+$ forces $L\to+\infty$, contradicting the existence of a uniform finite $L$.
Therefore, such a Lipschitz-type bound generally does not hold for $R_d$. Holding a nonzero score function fixed transfers this discontinuity to the gradient contribution.
This completes the proof.
\end{proof}

\paragraph{Reward-variation control.}
The next statement bounds reward variation in terms of changes in grounding confidence. It complements the score-function decomposition above by isolating the reward-dependent part of the feedback.

\begin{theorem}
\label{thm:appendix_variance_control}
Let $X\coloneqq p(I)\in[0,1]^K$ be a random grounding-confidence vector, let $X'$ be an independent copy, and define $f(X)=K^{-1}\sum_{k=1}^K X_k$, so that $R_s(I)=f(p(I))$. Then
\begin{align}
\label{eq:appendix_var_Rs}
\mathrm{Var}\!\big(R_s(I)\big)
&=\frac{1}{2}\,\mathbb{E}\!\left[\big(f(X)-f(X')\big)^2\right] \\
&\le \frac{1}{2K^2}\,\mathbb{E}\!\left[\|X-X'\|_1^2\right].
\end{align}
By contrast, the thresholded reward may retain constant-order variance for confidence vectors concentrated arbitrarily close to the threshold.
\end{theorem}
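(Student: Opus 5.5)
The plan has three parts: the variance identity via an independent copy, the inequality via Step 1 of the proof of Theorem~\ref{thm:appendix_smooth_reward_proof}, and an explicit counterexample for the thresholded reward.

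\textbf{Variance identity.} Set $Y=f(X)$ and $Y'=f(X')$. Since $X'$ is an independent copy of $X$, $Y'$ is an independent copy of $Y$, and $Y$ is bounded in $[0,1]$, so all moments exist. Expanding the square gives
\[
\mathbb{E}[(Y-Y')^2]=\mathbb{E}[Y^2]+\mathbb{E}[Y'^2]-2\,\mathbb{E}[Y]\,\mathbb{E}[Y'],
\]
where the cross term factors by independence. The right-hand side equals $2\big(\mathbb{E}[Y^2]-(\mathbb{E}Y)^2\big)=2\,\mathrm{Var}(Y)$, which is the equality in \eqref{eq:appendix_var_Rs}.

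\textbf{The inequality.} Apply Step 1 of the proof of Theorem~\ref{thm:appendix_smooth_reward_proof} pointwise, which gives $|f(X)-f(X')|\le K^{-1}\|X-X'\|_1$. Squaring preserves this inequality because both sides are nonnegative. Taking expectations by monotonicity and multiplying by $\tfrac12$ yields the stated bound $\frac{1}{2K^2}\mathbb{E}[\|X-X'\|_1^2]$.

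\textbf{Contrast for the thresholded reward.} This is the only step that needs a construction, and the main point is to make ``constant-order'' precise. I will show the variance stays bounded away from zero while the right-hand bound can be made arbitrarily small. Take $K=1$ and fix $\varepsilon\in(0,\min\{u,1-u\})$. Let $X$ equal $u-\varepsilon$ or $u+\varepsilon$ with probability $\tfrac12$ each.

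Then $R_d=\mathbf{1}[X\ge u]$ is a Bernoulli$(\tfrac12)$ variable, so its variance is $\tfrac14$ regardless of $\varepsilon$. By contrast, $\mathbb{E}[|X-X'|^2]=\tfrac12(2\varepsilon)^2=2\varepsilon^2$, so any bound of the form $C\,\mathbb{E}\|X-X'\|_1^2$ tends to $0$ as $\varepsilon\to0^+$.

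Hence no finite $C$ can give $\mathrm{Var}(R_d)\le C\,\mathbb{E}\|X-X'\|_1^2$ uniformly. This holds even though the confidence law is concentrated arbitrarily close to $u$. For general $K$, the same conclusion follows by letting every coordinate follow this same two-point law, fully correlated across coordinates.
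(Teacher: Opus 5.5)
Your proposal is correct and follows the paper's proof: the independent-copy identity $\mathrm{Var}(Y)=\tfrac12\mathbb{E}[(Y-Y')^2]$, the pointwise Lipschitz bound from Step~1, and a $K=1$ counterexample concentrated symmetrically around $u$ that gives $\mathrm{Var}(R_d)=1/4$. The differences are cosmetic. You use a two-point law at $u\pm\varepsilon$ and show that no constant $C$ can work, whereas the paper uses $\mathrm{Unif}[u-\delta,u+\delta]$ and compares against $\mathrm{Var}(R_s)=\delta^2/3$; your fully correlated extension to general $K$ is a valid small addition.
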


\begin{proof}
For any square-integrable scalar random variable $Y$ and an independent copy $Y'$, $\mathrm{Var}(Y)=\frac12\mathbb{E}[(Y-Y')^2]$. Applying this identity to $Y=f(X)$ and using Equation~\eqref{eq:appendix_Rs_lipschitz} gives
\begin{align}
\mathrm{Var}(R_s(I))
&=\frac12\mathbb{E}\!\left[(f(X)-f(X'))^2\right]\\
&\le \frac{1}{2K^2}\mathbb{E}\!\left[\|X-X'\|_1^2\right].
\end{align}
For the contrast, take $K=1$, choose $0<\delta<\min\{u,1-u\}$, and let $X_1=u+\xi$ with $\xi\sim\mathrm{Unif}[-\delta,\delta]$. Then $R_d=\mathbf{1}[\xi\ge0]$ has variance $1/4$, whereas $R_s=X_1$ has variance $\delta^2/3\to0$ as $\delta\to0$. Thus, hard thresholding can amplify arbitrarily small confidence perturbations around $u$.
\end{proof}

\paragraph{Implication for the gradient second moment.}
If $\|\Psi(\theta;I)\|\le B$ almost surely, then for $G(I)=\Psi(\theta;I)R(I)$,
\begin{align}
\label{eq:appendix_grad_second_moment}
\mathbb{E}\!\left[\|G(I)\|^2\right]
\le B^2\mathbb{E}\!\left[R(I)^2\right].
\end{align}
Consequently, controlling the reward second moment controls an upper bound on the gradient second moment. Together with Theorem~\ref{thm:appendix_variance_control}, this explains why continuous confidence is less sensitive to small grounding perturbations than a hard indicator. This is a stability argument rather than a universal ordering of the complete gradient variance or a convergence theorem.

\paragraph{Clipped-log shaping.}
The implementation uses
\begin{align}
\phi(t)=\left[\log\!\big(\max\{t,\eta\}\big)\right]_{c_{\min}}^{c_{\max}},\qquad \eta>0,
\end{align}
where $c_{\min}\le c_{\max}$ are finite and scalar clipping is non-expansive. Since $t\mapsto\log(\max\{t,\eta\})$ is at most $1/\eta$-Lipschitz on $[0,1]$, $\phi$ is also Lipschitz on this domain. Therefore,
\begin{align}
\left|\frac1K\sum_k\phi(X_k)-\frac1K\sum_k\phi(X'_k)\right|
\le \frac{1}{K\eta}\|X-X'\|_1,
\end{align}
so the shaped reward remains continuous in the confidence vector. Writing $R_\phi(I)=K^{-1}\sum_k\phi(p_k(I))$ and $C_\phi=\max\{|c_{\min}|,|c_{\max}|\}$, the same decomposition gives
\[
\|\Psi(I)R_\phi(I)-\Psi(I')R_\phi(I')\|
\le C_\phi\|\Psi(I)-\Psi(I')\|
+\frac{\|\Psi(I)\|}{K\eta}\|p(I)-p(I')\|_1.
\]
Here the finite clipping bounds determine the coefficient of score-function variation; the coefficient need not equal the unit bound for $R_s$. The probability floor and clipping constants are fixed using the validation split; they are not selected on any test set.

\section{Implementation Details}
\label{app:implementation}

\subsection{Models and Frozen Evaluators}
Unless stated otherwise, the default generator is Stable Diffusion v1.5, object grounding uses GLIP-T, relation verification uses BLIP-base, and the prompt editor is initialized from BART-base. The frozen MHE encoder is OpenAI CLIP ViT-L/14 and the graph encoder is a two-layer GCN with hidden size 768. During training, the T2I generator, GLIP, BLIP, CLIP, and all reward models are frozen; only the GCN and prompt editor are updated.

The aesthetic term uses the released \texttt{christophschuhmann/improved-aesthetic-predictor} implementation\footnote{\url{https://github.com/christophschuhmann/improved-aesthetic-predictor}} with checkpoint \texttt{sac+logos+ava1-l14-linearMSE.pth}. It consumes a normalized 768-dimensional OpenAI CLIP ViT-L/14 image embedding and applies an MLP with widths $768\!\rightarrow\!1024\!\rightarrow\!128\!\rightarrow\!64\!\rightarrow\!16\!\rightarrow\!1$, totaling 927,969 parameters. The released model uses dropout rates 0.2/0.2/0.1 and was trained by mean-squared error on SAC+Logos+AVA1 ratings. The predictor is frozen in both training and evaluation.

\subsection{Optimization Hyperparameters}
All hyperparameters are selected once on a reserved Lexica validation split and then fixed for every test set and generation backbone. SFT uses the 144K paired examples for 56K updates (approximately 50 epochs). The RL pool contains 24K prompts and training uses 6K optimizer updates, with four policy rollouts and three independently sampled images per rollout for environment-aware averaging. Tables~\ref{tab:hyperparameter} and~\ref{tab:hyper_reward} list the optimization and reward settings, respectively.

\begin{table}[htb]
\centering
\caption{SFT and RL optimization settings.}
\label{tab:hyperparameter}
\begin{tabular}{lcc}
\toprule
\textbf{Hyperparameter} & \textbf{SFT} & \textbf{RL}\\
\midrule
Batch size & 128 & 16\\
Learning rate & $2\times10^{-5}$ & $2\times10^{-5}$\\
Optimizer updates & 56K & 6K\\
Optimizer & AdamW & AdamW\\
AdamW $\epsilon$ & $10^{-8}$ & $10^{-8}$\\
AdamW $(\beta_1,\beta_2)$ & $(0.9,0.999)$ & $(0.9,0.999)$\\
Weight decay & $10^{-2}$ & $10^{-2}$\\
Policy rollouts $N$ & -- & 4\\
Images per rollout $M$ & -- & 3\\
\bottomrule
\end{tabular}
\end{table}

\begin{table}[htb]
\centering
\caption{Reward and regularization weights used in RL.}
\label{tab:hyper_reward}
\begin{tabular}{lc}
\toprule
\textbf{Hyperparameter} & \textbf{Value}\\
\midrule
Structural reward weight $\alpha_1$ & 0.6\\
Aesthetic reward weight $\alpha_2$ & 1\\
Global CLIP weight $\alpha_3$ & 5\\
KL coefficient $\beta$ & 0.2\\
Entropy coefficient $\lambda$ & 0.05\\
\bottomrule
\end{tabular}
\end{table}

\subsection{Computational Budget}
Table~\ref{tab:comp_budget} summarizes the reported training runs on NVIDIA GeForce RTX 3090 GPUs with 24GB memory. SFT uses four GPUs for approximately 80 GPU-hours; RL uses five GPUs for approximately 100 GPU-hours. The DPOK comparison matches the 100-GPU-hour RL stage; \ours{} additionally incurs SFT cost.

\begin{table}[htb]
\centering
\caption{Compute for the reported StruPO training stages.}
\label{tab:comp_budget}
\begin{tabular}{lcc}
\toprule
& \textbf{SFT} & \textbf{RL}\\
\midrule
Number of GPUs & 4 & 5\\
Total GPU-hours & 80 & 100\\
\bottomrule
\end{tabular}
\end{table}

\section{Reproduction Details}

This section records the baseline configurations, human-study protocol, and statistical procedures used in the reported experiments.

\subsection{Baseline Setup}
\label{app:baseline_setup}

\paragraph{Template.}
The Template baseline constructs the optimized prompt by embedding the raw prompt into a pre-designed template. The specific template we used is:
\begin{mdframed}
    \textcolor{blue}{\textless Prompt\textgreater}, cinematic, colorful background, concept art, dramatic lighting, high detail, highly detailed, hyper realistic, intricate, intricate sharp details, octane render, smooth, studio lighting, trending on artstation
\end{mdframed}

\paragraph{Promptist.}
Promptist employs a GPT-2-based model with 117M parameters to rewrite the raw prompt. Unless otherwise specified, we follow the configuration and provided weights from the original paper for comparative experiments. The weights are trained through 15k steps of supervised fine-tuning with a batch size of 256 and further optimized using the PPO algorithm over 12k episodes with a batch size of 256.

\paragraph{MagicPrompt.}
MagicPrompt uses the released GPT-2 117M continuation checkpoint, trained for 150K steps on roughly 80K diffusion prompts~\cite{MagicPrompt}. Rather than generating an optimized prompt from scratch, it appends a generated continuation to the raw prompt. We use the official weights and decoding interface.

\paragraph{BeautifulPrompt.}
BeautifulPrompt uses the 1.1B-parameter BLOOM checkpoint with 24 transformer layers. The released model is supervised-fine-tuned for four epochs on 143K samples and then optimized for 5,000 PPO steps. We use the officially provided weights.

\paragraph{Human-crafted.}
Where a dataset provides user-refined target prompts, we treat those targets as the Human-crafted baseline. COCO does not provide corresponding optimized targets, so this baseline is not reported there.

\paragraph{Rephrase.}
We use GPT-3.5 and GPT-4o to rephrase each raw prompt with the following instruction:

\begin{mdframed}
Give a simple description of the image to generate a drawing prompt. \\
1. Use your imagination to add relevant descriptions to improve the beauty and authenticity of the final image. \\
2. Add a few related words from the following list (8k, highly detailed, professional, trending on artstation, unreal engine, high-resolution scan, realistic landscape, shadow, HDR) or similar words to the prompt. \\
3. Output a positive English prompt in 50 words or less. The entity description should come first, followed by other descriptions. \\
Input: \textcolor{blue}{\textless Prompt\textgreater} \\
Output:
\end{mdframed}

\paragraph{Few-shot.}
We additionally introduce a few-shot approach by prompting LLMs with 3 example input-output pairs. The examples are fixed, and the overall prompt template is:

\begin{mdframed}
    \small Give a simple description of the image to generate a drawing prompt. \\
    1. Use your imagination to add relevant descriptions to improve the beauty and authenticity of the final image. \\
    2. Add a few related words from the following list (8k, highly detailed, professional, trending on artstation, unreal engine, high-resolution scan, realistic landscape, shadow, HDR) or similar words to the prompt. \\
    3. Output a positive English prompt in 50 words or less. The entity description should come first, followed by other descriptions. \\
    Input: ilya kuvshinov has long sky blue hair \\
    Output: ilya kuvshinov with long sky blue hair, gold eyes, professional digital painting, concept art, unreal engine 5, 8k, cinematic, wlop, bubbles, tendrils in the background, art by greg rutkowski, pixiv art, junji ito, yoshitaka amano\\
    Input: the nature landscape was breathtakingly beautiful \\
    Output: nature landscape, aerial view, drone photography, cinematic, mountains and ocean, cinematic view, epic sky, detailed, concept art, high detail, warm lighting, volumetric, godrays, vivid, beautiful, trending on artstation, by jordan grimmer, art greg rutkowski \\
    Input: emma watson is portrayed as hermione granger, sitting next to a window and reading a book \\
    Output: portrait of emma watson as hermione granger sitting next to a window reading a book, focused expression, face focus, golden hour, art by Kenne Gregoire and krenz cushart, trending on artstation \\
    Input: \textcolor{blue}{\textless Prompt\textgreater}\\
    Output:
\end{mdframed}

\paragraph{VisualPrompter.}
We conduct our evaluation using the official code released by the authors on OpenReview\footnote{\url{https://openreview.net/forum?id=hIwVFRLaFy}}.
All hyperparameter settings follow the default configurations.
In addition, we use Qwen2.5-14B as the optimizer model and Qwen2.5-VL as the vision--language evaluator.

\paragraph{PAE.}
PAE~\cite{pae} jointly predicts prompt modifiers, token weights, and denoising-time ranges. Because its published interface differs from a text-only reprompter, we execute the learned token weights and time ranges through the official dynamic-conditioning pipeline while using SD v1.5 and the same Lexica inputs.

\paragraph{RePrompt.}
RePrompt~\cite{reprompt} is a learned, single-step GRPO reprompting policy trained from image-level preference, semantic, and composition rewards. We use the official implementation and render its text output with the same SD v1.5 generation and evaluation protocol used for the other prompt editors.

\paragraph{DPOK and Attend-and-Excite.}
These methods characterize alternative intervention points beyond reusable text. DPOK~\cite{dpok} updates diffusion parameters; our controlled run initializes from SD v1.5, uses the same 24K RL prompts and composite reward, and is capped at 100 RL GPU-hours. Attend-and-Excite~\cite{attendandexcite} intervenes in cross-attention/latent trajectories for every image and is evaluated with the official inference settings.

\paragraph{GPT-4o-SG-Iter.}
This high-budget test-time baseline alternates image generation, structural critique, and GPT-4o rewriting for three rounds. The full 2K-prompt, three-seed comparison and exact intervention accounting appear in Appendix~\ref{app:iterative_baseline}.

\subsection{Human Evaluation Setup}
\label{app:human_details}

\begin{figure*}[htb]
\centering
\includegraphics[width=0.94\linewidth]{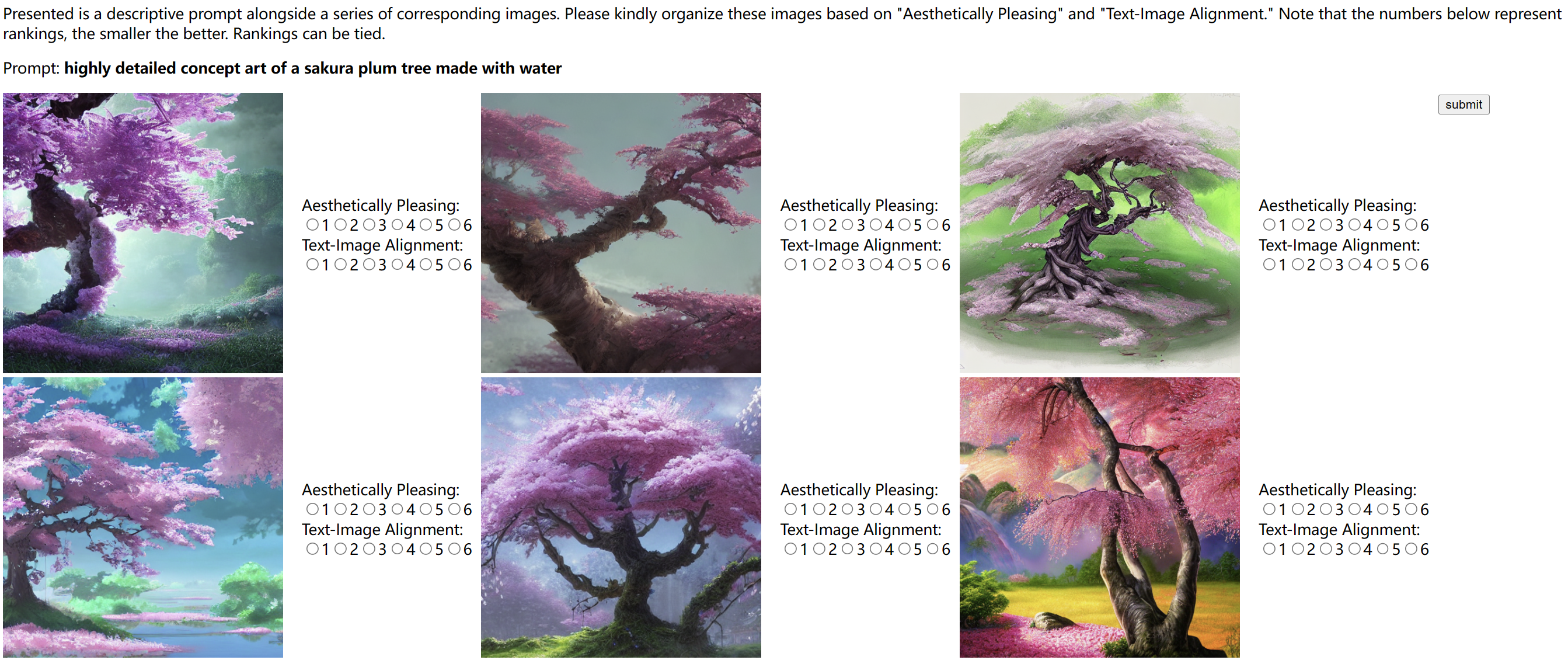}
\caption{Interface shown to annotators. Method names are hidden and left--right order is randomized.}
\label{interface}
\end{figure*}

\paragraph{Participants and task.}
Figure~\ref{interface} shows the anonymized comparison interface. Fifteen independent annotators, aged 20--30, participated. Their backgrounds include design, computer vision, media production, and visual-content evaluation. Personally identifying information was not included in the analysis. Annotators saw the original user prompt and two anonymized matched-seed images and judged (i) visual aesthetics and (ii) semantic consistency in separate questions. They could select the left image, the right image, or a tie. Tie responses were retained without applying a secondary criterion. Image order was independently randomized for every comparison, and annotators worked independently.

The same protocol was applied to all 2,000 prompts in each of Lexica, DiffusionDB, and COCO. Prompt-level win/tie/loss labels were obtained by majority aggregation across annotators. On Lexica, Fleiss' $\kappa$ is 0.40 for aesthetics and 0.43 for consistency; prompt-bootstrap 95\% half-widths are approximately $\pm2.1$ and $\pm2.4$ percentage points, respectively. Table~\ref{tab:human_all_datasets} summarizes the aggregate results on all three datasets.

\begin{table*}[htb]
\centering
\caption{Human preference for StruPO across all datasets. Each entry is StruPO win/tie/loss (\%) against the comparator.}
\label{tab:human_all_datasets}
\resizebox{0.94\textwidth}{!}{
\begin{tabular}{llcc}
\toprule
\textbf{Dataset} & \textbf{Comparator} & \textbf{Aesthetics W/T/L} & \textbf{Consistency W/T/L}\\
\midrule
Lexica & Promptist & 54/22/24 & 36/37/27\\
& VisualPrompter & 48/18/34 & 39/32/29\\
& Few-shot GPT-4o & 53/17/30 & 42/23/35\\
\midrule
DiffusionDB & Promptist & 60/19/21 & 44/33/23\\
& VisualPrompter & 67/16/17 & 62/21/17\\
& Few-shot GPT-4o & 59/20/21 & 43/33/24\\
\midrule
COCO & Promptist & 58/20/22 & 41/34/25\\
& VisualPrompter & 71/14/15 & 31/33/36\\
& Few-shot GPT-4o & 60/19/21 & 38/35/27\\
\bottomrule
\end{tabular}}
\end{table*}

StruPO has a positive W--L margin in 17 of the 18 dataset--comparator--criterion comparisons. On COCO, VisualPrompter is preferred for consistency (31/33/36 from StruPO's perspective), whereas StruPO is preferred for aesthetics (71/14/15). This matches the distinct quality--faithfulness operating points in the automatic evaluation.

\paragraph{Exact Lexica counts and significance.}
Table~\ref{tab:human_exact_counts} reports prompt-level counts for consistency. The 95\% intervals describe the W--L margin over all prompts; the $p$ values are two-sided exact sign tests over decisive (non-tie) prompts.

\begin{table}[htb]
\centering
\caption{Exact Lexica consistency results over 2,000 prompts.}
\label{tab:human_exact_counts}
\resizebox{0.96\linewidth}{!}{
\begin{tabular}{lcccc}
\toprule
\textbf{Comparator} & \textbf{W/T/L counts} & \textbf{Decisive win rate} & \textbf{95\% CI for W--L} & \textbf{Sign-test $p$}\\
\midrule
Promptist & 721/738/541 & 57.1\% & $[+5.5,+12.5]$ pp & $4.49\times10^{-7}$\\
VisualPrompter & 781/638/581 & 57.4\% & $[+6.4,+13.6]$ pp & $6.61\times10^{-8}$\\
Few-shot GPT-4o & 841/458/701 & 54.5\% & $[+3.2,+10.8]$ pp & $3.97\times10^{-4}$\\
\bottomrule
\end{tabular}}
\end{table}

\paragraph{Forced-choice sensitivity analysis.}
We repeated the evaluation on the same image pairs with a strict forced-choice instruction. A prompt is counted as a win only when a majority of annotators prefers StruPO. StruPO remains preferred under this protocol (Table~\ref{tab:human_forced_choice}), consistent with the direction of the tie-allowed results.

\begin{table}[htb]
\centering
\caption{Majority-aggregated StruPO win rates under strict forced choice on Lexica.}
\label{tab:human_forced_choice}
\begin{tabular}{lcc}
\toprule
\textbf{Comparator} & \textbf{Aesthetics} & \textbf{Consistency}\\
\midrule
Promptist & 65.2\% & 54.8\%\\
VisualPrompter & 57.0\% & 55.7\%\\
Few-shot GPT-4o & 61.7\% & 53.5\%\\
\bottomrule
\end{tabular}
\end{table}

\paragraph{Relation-complexity analysis.}
We reaggregate the original tie-allowed judgments using the relation-count buckets from the automatic complexity analysis: $|V_x|=1$ ($n=807$), $|V_x|=2$ ($n=694$), and $|V_x|\ge3$ ($n=499$). Table~\ref{tab:human_relation_complexity} reports W/T/L and the W--L margin in parentheses. Ordered multinomial-logistic trend tests remain significant after Holm correction.

\begin{table*}[htb]
\centering
\caption{Lexica consistency preference by relation complexity. Entries are W/T/L (W--L percentage points).}
\label{tab:human_relation_complexity}
\resizebox{0.94\textwidth}{!}{
\begin{tabular}{lccccc}
\toprule
\textbf{Comparator} & $|V_x|=1$ ($n=807$) & $|V_x|=2$ ($n=694$) & $|V_x|\ge3$ ($n=499$) & \textbf{Raw $p$} & \textbf{Holm $p$}\\
\midrule
Promptist & 34/38/28 (+6) & 35/38/27 (+8) & 41/34/25 (+16) & .0128 & .0256\\
VisualPrompter & 36/34/30 (+6) & 38/33/29 (+9) & 45/27/28 (+17) & .0064 & .0192\\
Few-shot GPT-4o & 40/23/37 (+3) & 41/24/35 (+6) & 47/20/33 (+14) & .0276 & .0276\\
\bottomrule
\end{tabular}}
\end{table*}

The human-perceptible consistency advantage therefore increases with structural complexity: the W--L margin rises by 10--11 points from the lowest to the highest relation bucket for all three strong baselines.

\paragraph{Consent, privacy, and compensation.}
Participants were informed about the task, data use, voluntary participation, and their right to withdraw; written consent was obtained. Identifiers were removed from the analysis and records were stored for research use. Participants were compensated at the study's standard hourly rate. The exact compensation rate and IRB/equivalent approval or exemption status are not specified in the study documentation.

\subsection{Error Bars and Statistical Stability}
\label{appendix:error_bars}
For automatic metrics, each prompt-level score is first averaged over three generated images. We then resample prompts with replacement for 1,000 bootstrap trials. Across Lexica, DiffusionDB, and COCO, the average 95\% half-widths are approximately $\pm0.02$ for AES$_\Delta$, $\pm0.6$ for OR, $\pm1.1$ for RR, and $\pm0.8$ for SGR. The pairwise ranking between StruPO and the strongest non-StruPO baseline remains unchanged in 91.2\% of bootstrap trials for SGR and 90.6\% for AES$_\Delta$. These intervals quantify prompt-sampling uncertainty; they do not account for uncertainty in the pretrained evaluators themselves.

\section{Additional Analyses}

We analyze sensitivity to prompt length and format, token-level attention, template composition, and interactive editing.

\begin{table*}[htb]
\centering
\caption{Results on different lengths of input prompts.}
\label{tab:length}
\resizebox{0.8\linewidth}{!}{
\begin{tabular}{lccccccccccccc}
\toprule
&\multicolumn{4}{c}{\textbf{0 $\sim$ 20}} &\multicolumn{4}{c}{{\textbf{20 $\sim$ 40}}}&\multicolumn{4}{c}{{\textbf{$\geq$ 40}}} \\
\cmidrule(lr){2-5}\cmidrule(lr){6-9}\cmidrule(lr){10-13}
\textbf{Models} & \textbf{AES}$_\Delta$ & \textbf{OR} & \textbf{RR} & \textbf{SGR} & \textbf{AES}$_\Delta$ & \textbf{OR} & \textbf{RR} & \textbf{SGR} & \textbf{AES}$_\Delta$ & \textbf{OR} & \textbf{RR} & \textbf{SGR} \\
\midrule
Template & 0.62 & 92.54 & 61.14 & 76.84 & 0.39 & 85.13 & 59.16 & 72.14 & 0.18 & 70.98 & 45.24 & 58.11 \\
Promptist & 0.59 & 92.53 & 59.86 & 76.20 & 0.30 & 85.50 & 63.85 & 74.68 & 0.09 & 66.73 & 34.52 & 50.63 \\
MagicPrompt & 0.56 & 91.96 & 60.63 & 76.29 & 0.33 & 85.95 & 62.71 & 74.33 & 0.14 & 73.70 & 44.05 & 58.87 \\
BeautifulPrompt & 0.68 & 90.34 & 46.20 & 68.27 & 0.40 & 78.80 & 35.99 & 57.39 & 0.25 & 63.53 & 14.29 & 38.91 \\
Human-crafted & 0.66 & 92.53 & 58.75 & 75.64 & 0.37 & 84.27 & 56.12 & 70.19 & 0.23 & 71.13 & 22.62 & 46.88 \\
Few-shot (GPT-4o) & 0.49 & 93.00 & 63.80 & 78.40 & 0.33 & 85.07 & 59.69 & 72.38 & 0.15 & 73.53 & 65.48 & 69.50 \\
\midrule
\rowcolor{mygray}\ours{} w/o RL (Ours) & 0.68 & 92.10 & 58.75 & 75.42 & 0.39 & 85.98 & 57.69 & 71.84 & \textbf{0.27} & 69.85 & \textbf{71.43} & \textbf{70.64} \\
\rowcolor{mygray}\ours{} (Ours)  & \textbf{0.78} & \textbf{94.04} & \textbf{71.97} & \textbf{83.01} & \textbf{0.43} & \textbf{88.70} & \textbf{68.13} & \textbf{78.41} & 0.16 & \textbf{76.65} & 61.90 & 69.28 \\
\bottomrule
\end{tabular}
}
\end{table*}

\subsection{Sensitivity to Prompt Settings}

We analyze the sensitivity of \ours{} to two prompt characteristics: input length and prompt format.

\paragraph{Prompt Length.}
Table~\ref{tab:length} stratifies results by raw-prompt length. Full \ours{} is best on all four metrics in the 0--20 and 20--40 token buckets. In the $\geq 40$ bucket, it retains the highest object recall (76.65), but the SFT-only variant (\ours{} w/o RL) has higher AES$_\Delta$, relation recall, and SGR (0.27/71.43/70.64 versus 0.16/61.90/69.28). The gains are therefore strongest for short and medium inputs. The long-input results identify a setting in which additional image-grounded training does not improve every objective over supervised editing.

\paragraph{Prompt Format.}
Table~\ref{tab:cat} separates rewritten inputs from semantic-skeleton inputs constructed as described in Appendix~\ref{appendix:1}. Within this Lexica evaluation, \ours{} has the highest AES$_\Delta$, OR, RR, and SGR in both format groups. Thus, the improvement holds for both input forms used to construct the prompt-optimization task.

\begin{table}[htb]
\centering
\caption{Results on different categories of input prompts.}
\label{tab:cat}
\resizebox{0.7\linewidth}{!}{
\begin{tabular}{lcccccccc}
\toprule
&\multicolumn{4}{c}{\textbf{Rewriting}} &\multicolumn{4}{c}{\textbf{Semantic Skeleton}} \\
\cmidrule(lr){2-5}\cmidrule(lr){6-9}
\textbf{Models} & \textbf{AES}$_\Delta$ & \textbf{OR} & \textbf{RR} & \textbf{SGR} & \textbf{AES}$_\Delta$ & \textbf{OR} & \textbf{RR} & \textbf{SGR} \\
\midrule
Template & 0.53 & 88.45 & 60.90 & 74.67 & 0.53 & 90.37 & 57.75 & 74.06 \\
Promptist & 0.46 & 88.39 & 63.44 & 75.92 & 0.49 & 90.51 & 57.99 & 74.25 \\
MagicPrompt & 0.46 & 89.35 & 63.14 & 76.24 & 0.47 & 89.46 & 58.46 & 73.96 \\
BeautifulPrompt & 0.58 & 83.86 & 40.95 & 62.41 & 0.55 & 87.34 & 38.22 & 62.78 \\
Human-crafted & 0.54 & 88.31 & 56.87 & 72.59 & 0.54 & 89.88 & 55.47 & 72.67 \\
Rephrase (GPT-3.5) & 0.43 & 88.18 & 62.55 & 75.37 & 0.42 & 91.25 & 60.36 & 75.81 \\
\midrule
\rowcolor{mygray}\ours{} w/o RL (Ours) & 0.59 & 88.37 & 60.24 & 74.30 & 0.55 & 90.47 & 56.06 & 73.27 \\
\rowcolor{mygray}\ours{} (Ours) & \textbf{0.63} & \textbf{91.25} & \textbf{70.45} & \textbf{80.85} & \textbf{0.64} & \textbf{92.23} & \textbf{68.44} & \textbf{80.33} \\
\bottomrule
\end{tabular}
}
\end{table}

\subsection{Token-Level Attention Visualization}
Following prior work, we visualize token-level diffusion attention with DAAM\footnote{\url{https://github.com/castorini/daam}} before and after prompt optimization. Figure~\ref{appendix:fig_show} shows randomly selected Lexica examples in which attention to scene-graph objects becomes more localized after editing. These qualitative maps illustrate token-level attention patterns alongside the quantitative grounding results.

\begin{figure}[htb]
\centering
\resizebox{0.5\linewidth}{!}{
\includegraphics{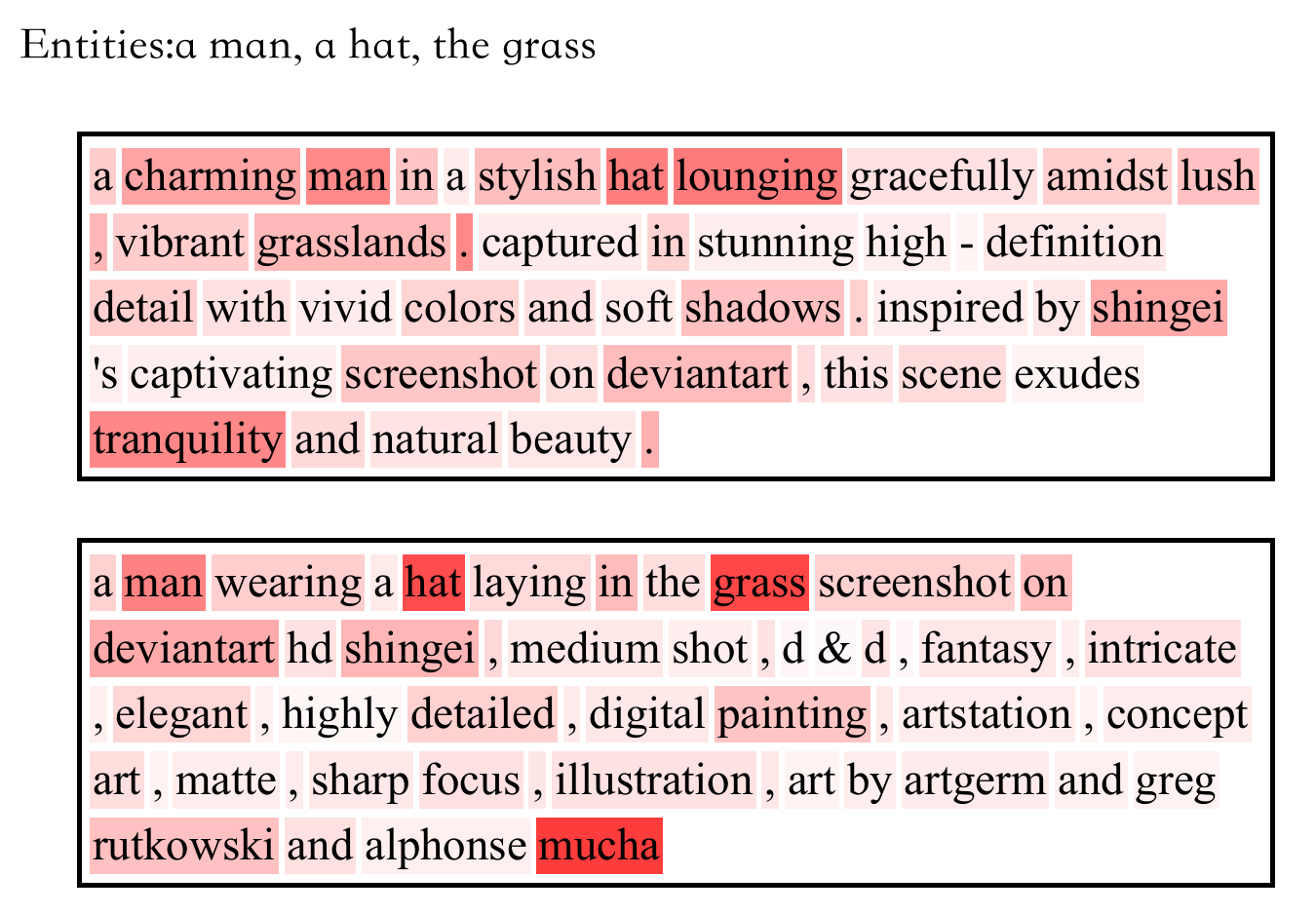}
}
\caption{DAAM attention visualization for a representative case: raw prompt (top) and \ours{}-optimized prompt (bottom).}
\label{appendix:fig_show}
\end{figure}

\subsection{Comparisons with Various Templates}

We test the style-suffix variants listed in Table~\ref{template}; Table~\ref{tab:template_result} reports the results. Some shorter variants improve SGR relative to the full suffix, but all reduce AES$_\Delta$. The strongest variant (Tag4) reaches SGR 76.36, still 4.38 points below \ours{}.

\begin{table}[htb]
\centering
\caption{Comparison with fixed style-suffix variants on Lexica.}
\label{tab:template_result}
\begin{tabular}{lcccc}
\toprule
\textbf{Method} & \textbf{AES}$_\Delta$ & \textbf{OR} & \textbf{RR} & \textbf{SGR} \\
\midrule
Full & 0.53 & 89.50 & 59.63 & 74.57 \\
Tag0 & 0.24 & 90.00 & 60.14 & 75.07 \\
Tag1 & 0.34 & 89.90 & 61.14 & 75.52 \\
Tag2 & 0.30 & 90.22 & 60.04 & 75.13 \\
Tag3 & 0.42 & 89.42 & 59.26 & 74.34 \\
Tag4 & 0.45 & 90.31 & 62.41 & 76.36 \\
Tag5 & 0.40 & 89.29 & 58.00 & 73.64 \\
Tag6 & 0.35 & 89.37 & 60.97 & 75.17 \\
Tag7 & 0.47 & 89.93 & 59.67 & 74.80 \\
\midrule
\rowcolor{mygray}\ours{} & \textbf{0.63} & \textbf{91.84} & \textbf{69.64} & \textbf{80.74} \\
\bottomrule
\end{tabular}
\end{table}

\begin{table*}[htb]
\centering
\caption{Style suffixes used in the template comparison.}
\label{template}
\resizebox{0.8\linewidth}{!}{
\begin{tabular}{cp{13cm}}
\toprule
\textbf{Tag} & \textbf{Content} \\
\midrule
Full& cinematic, colorful background, concept art, dramatic lighting, high detail, highly detailed, hyper realistic, intricate, intricate sharp details, octane render, smooth, studio lighting, trending on artstation \\
\midrule
0& dramatic lighting, smooth, hyper realistic, colorful background \\
\midrule
1& colorful background, concept art, octane render, hyper realistic\\
\midrule
2& smooth, highly detailed, octane render, intricate\\
\midrule
3& trending on artstation, concept art, high detail, dramatic lighting\\
\midrule
4& colorful background, octane render, dramatic lighting, studio lighting, high detail, smooth, concept art, intricate\\
\midrule
5& studio lighting, intricate sharp details, high detail, trending on artstation, dramatic lighting, octane render, highly detailed, hyper realistic\\
\midrule
6& colorful background, smooth, concept art, studio lighting, intricate, high detail, hyper realistic, dramatic lighting\\
\midrule
7& dramatic lighting, intricate sharp details, smooth, hyper realistic, octane render, colorful background, cinematic, high detail\\
\bottomrule
\end{tabular}
}
\end{table*}

\section{Supplementary Qualitative Material}

\subsection{Interactive Prompt Optimization}
Because \ours{} represents optimization as deletion, reordering, and insertion, the resulting edit trace can be inspected and constrained. A user interface can, for example, lock selected object phrases before decoding. The action representation exposes these controls directly to an interface.

Figure~\ref{appendix:fig_platform} shows a prototype front end for inspecting intermediate edits and specifying object-preservation constraints. The prototype illustrates an interaction design; no separate usability study is reported.

\begin{figure*}[htb]
\centering
\includegraphics[width=0.86\textwidth]{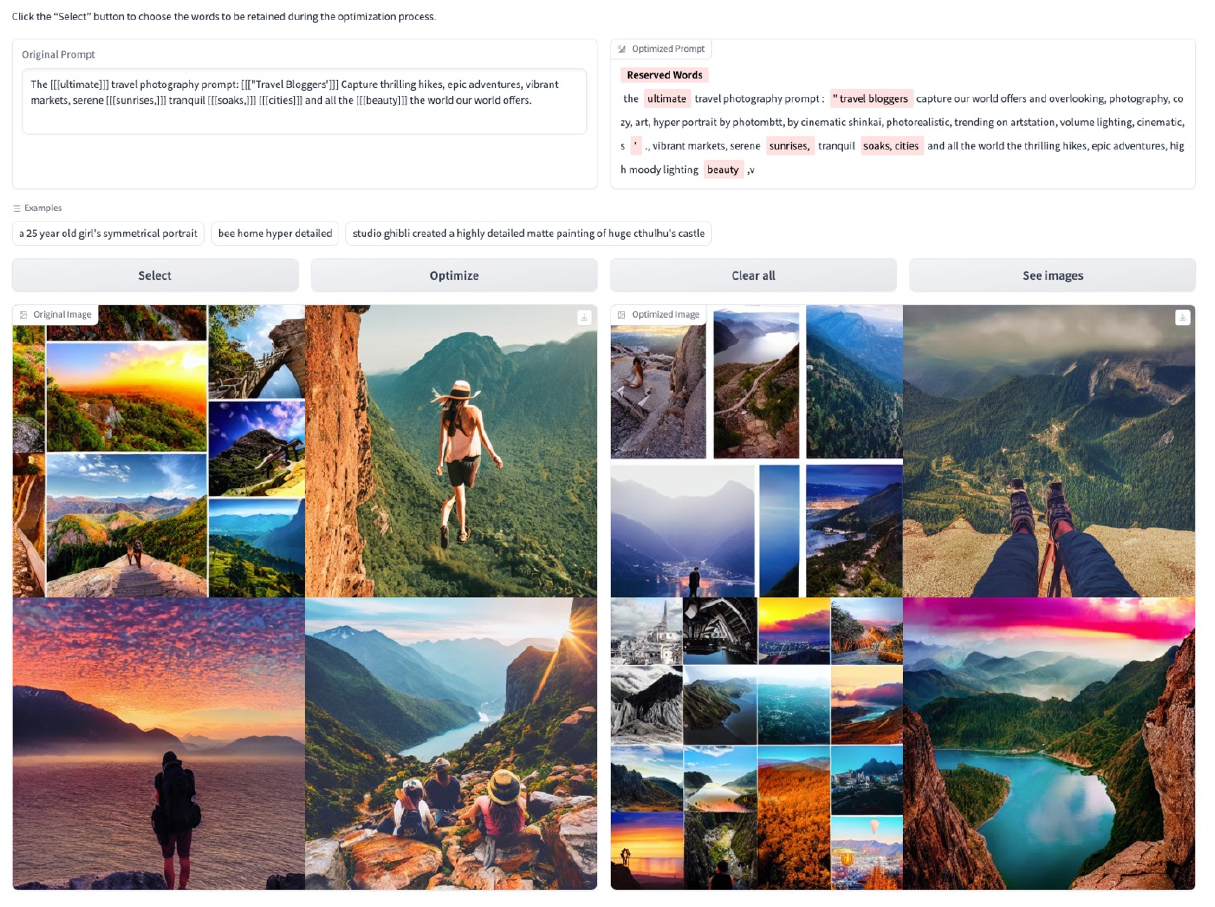}
\caption{Prototype interface for inspecting edits and specifying object-preservation constraints.}
\label{appendix:fig_platform}
\end{figure*}

\subsection{Case Study of Editing}
Figure~\ref{casestudy_edit} records intermediate deletion, reordering, and insertion outputs. The trace shows exactly where tokens are retained, moved, or introduced. It also exposes awkward word order and repeated modifiers, illustrating that an inspectable edit sequence need not be a fluent final description.

\begin{figure*}[htb]
\centering
\includegraphics[width=0.56\textwidth]{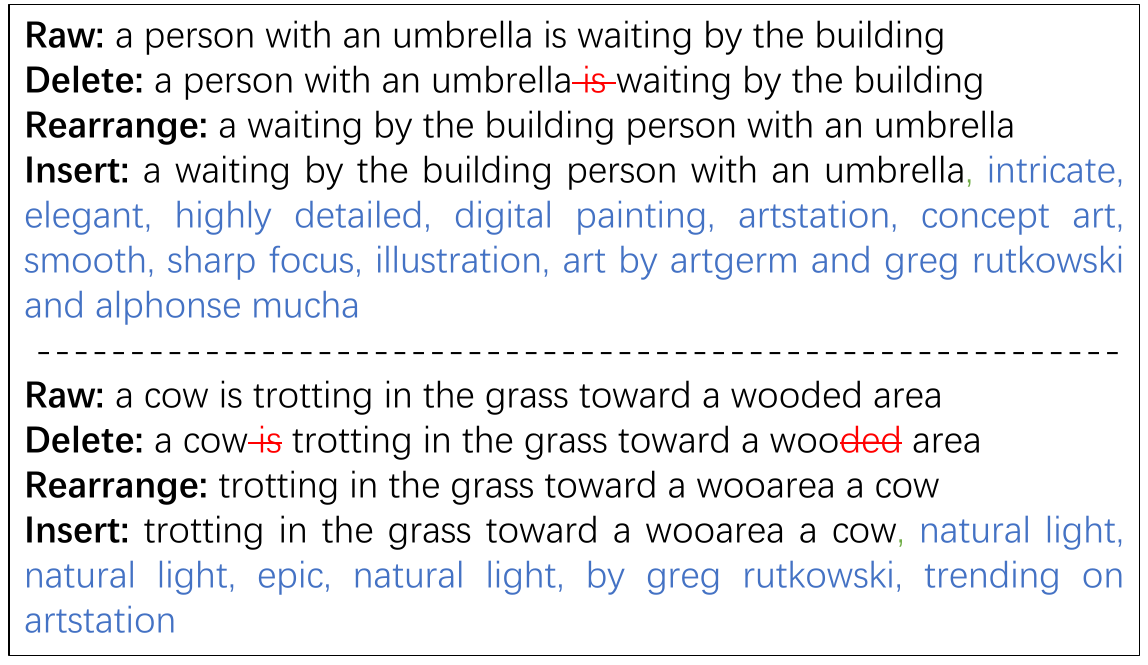}
\caption{Intermediate edit traces for two prompts, including residual word-order and repetition errors.}
\label{casestudy_edit}
\end{figure*}

\subsection{Additional Image Cases}
We provide additional visual examples generated by Stable Diffusion v1.5 (Figure~\ref{figure1}), Stable Diffusion v1.4 (Figure~\ref{figure2}), and Stable Diffusion v2.1 (Figure~\ref{figure3}).

\begin{figure*}[htb]
\centering
\includegraphics[width=0.86\textwidth]{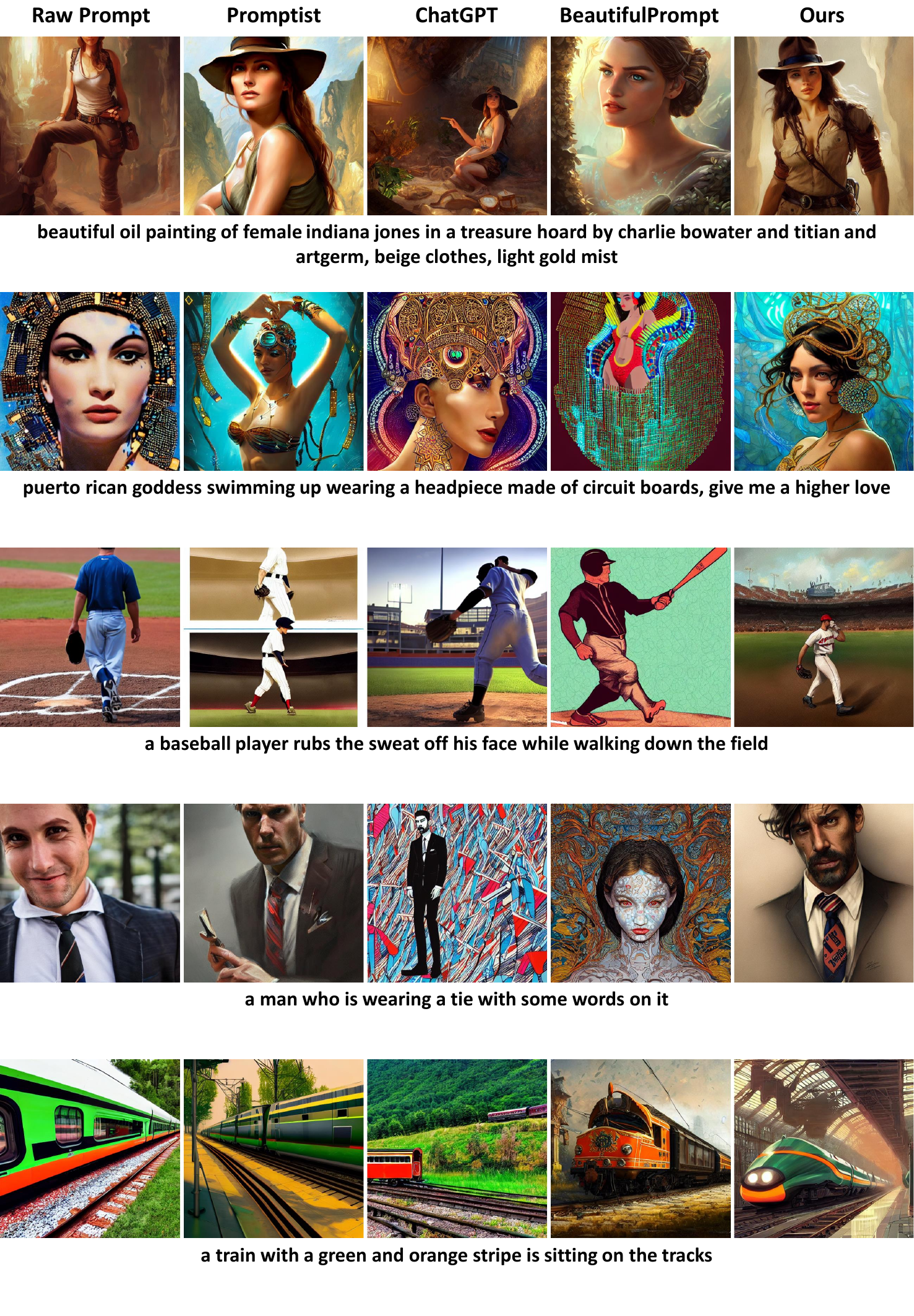}

\caption{More cases generated by Stable Diffusion v1.5.}
\label{figure1}
\end{figure*}
\clearpage

\begin{figure*}[htb]
\centering
\includegraphics[width=0.86\textwidth]{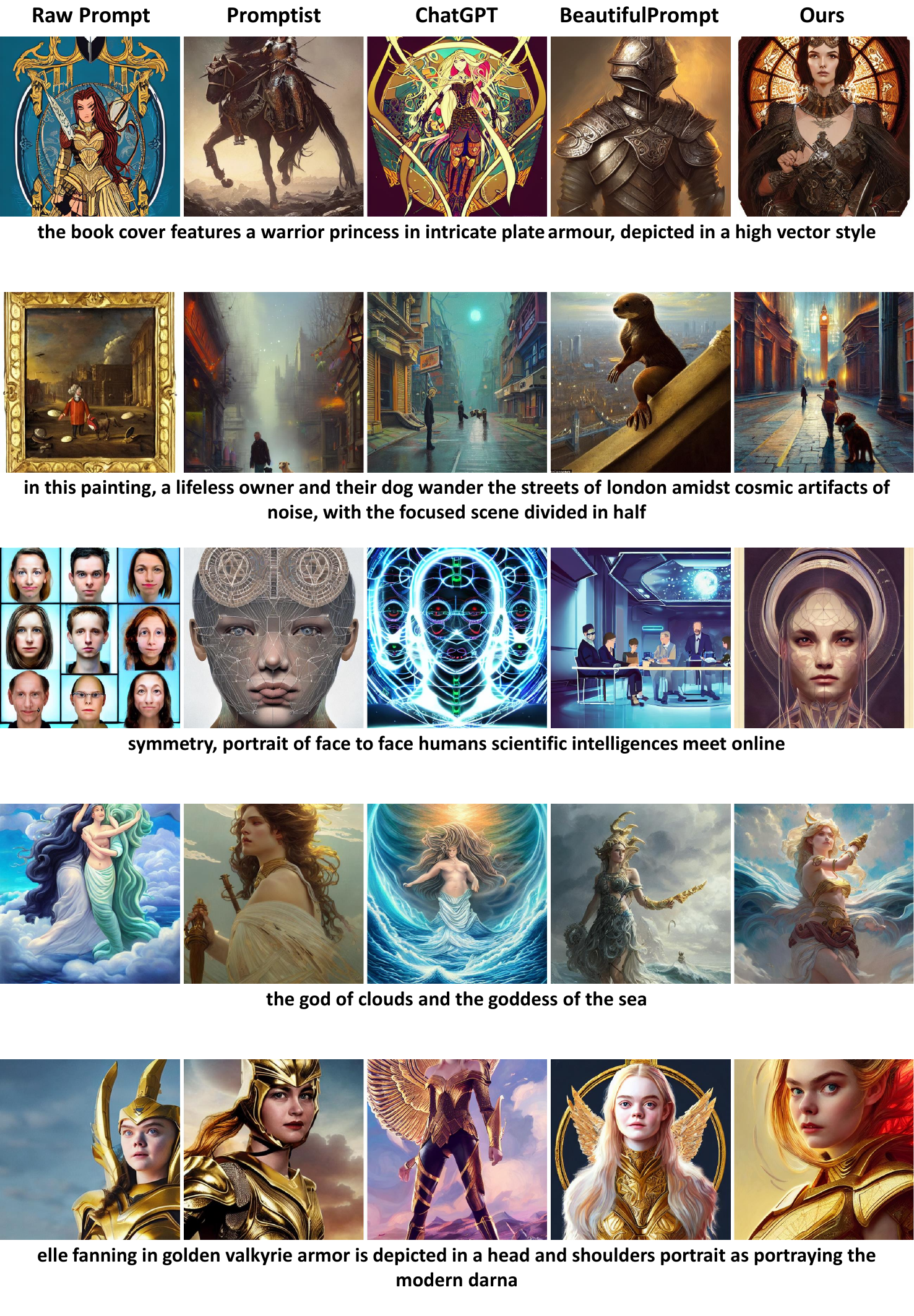}
\caption{More cases generated by Stable Diffusion v1.4.}
\label{figure2}
\end{figure*}
\clearpage

\begin{figure*}[htb]
\centering
\includegraphics[width=0.86\textwidth]{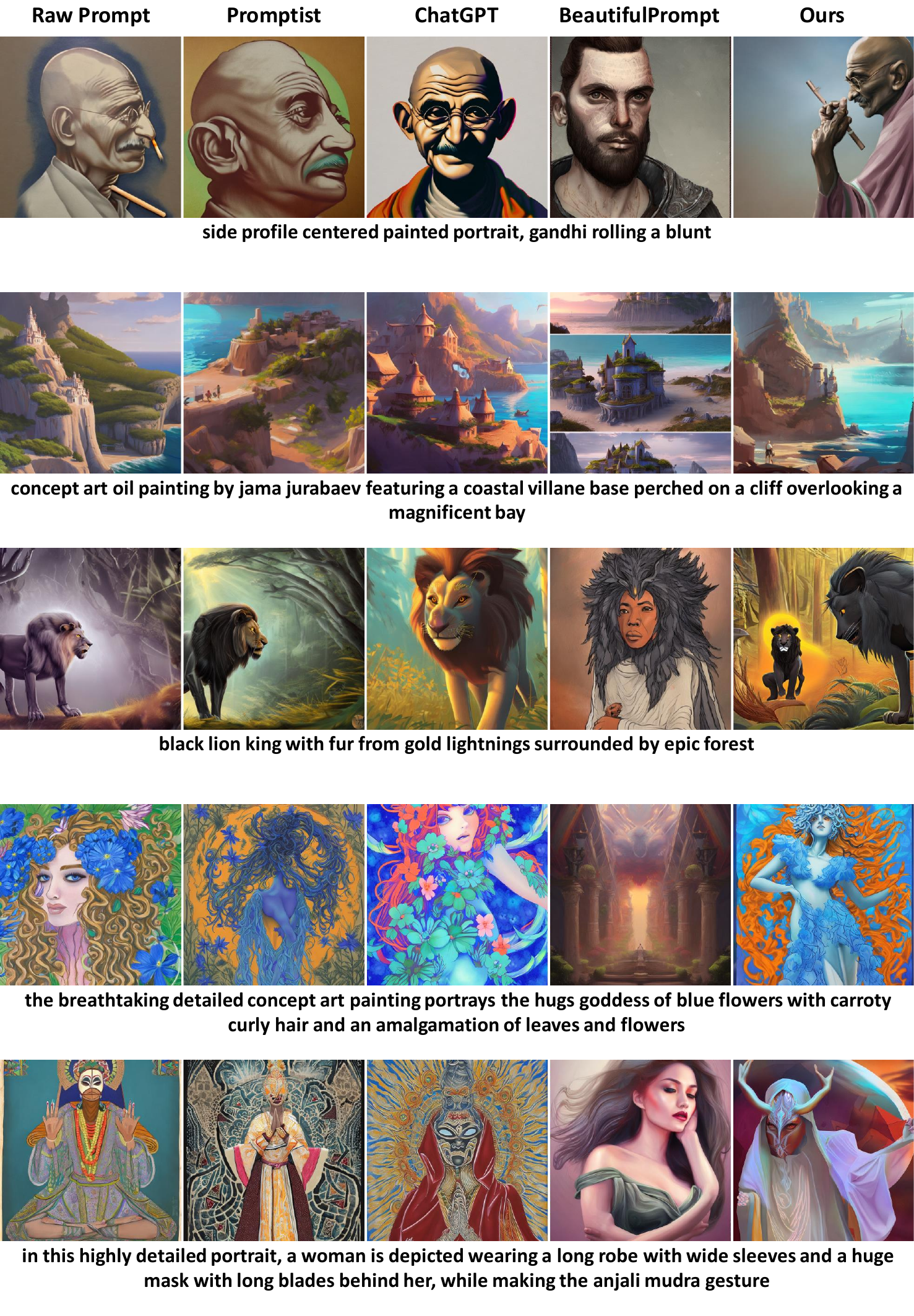}

\caption{More cases generated by Stable Diffusion v2.1.}
\label{figure3}
\end{figure*}

\section{Additional Evaluation and Robustness Analyses}
\label{app:additional_eval}
We organize the supplementary evidence around three questions: whether improvements transfer across evaluators, how the learned editor compares with alternative optimization strategies, and which structural and training choices drive its behavior. Human-study details and significance analyses appear in Appendix~\ref{app:human_details}.

\subsection{Gemini-VQA Protocol}
\label{app:gemini_vqa}
Gemini is used only as a held-out macro-level VQA judge. It receives the original prompt and one generated image, considers object presence, salient attributes, and main relations, and returns an overall prompt-satisfaction score from 0 to 100 plus a short explanation. Scores are averaged over the evaluation prompts. Gemini is not used in training, model selection, or reward construction. A representative instruction is:
\begin{quote}
\small
You are given a user prompt and a generated image. Evaluate how well the image satisfies the prompt at a coarse semantic level. Consider whether the key objects are present, whether salient attributes are correct, and whether the main relations are respected. Return an overall satisfaction score from 0 to 100, where 100 means the prompt is fully satisfied. Also provide a short explanation.
\end{quote}

\subsection{Independent Automatic Evaluators}
\label{app:external_metrics}
OR/RR/SGR measure the trained structural objective on held-out prompts. To evaluate the same images through separate scoring pipelines, we use three evaluators absent from default StruPO training: TIFA~\cite{tifa} with mPLUG-large, VQAScore~\cite{vqascore} with \texttt{clip-flant5-xxl}, and ImageReward-v1.0~\cite{metric:imagereward}. Every evaluator receives the original user prompt and the matched generated image. We average three images within each prompt and then all 2,000 prompts. Prompt IDs, T2I settings, and seed lists are shared across methods.

\begin{table*}[htb]
\centering
\caption{Independent evaluation on three disjoint 2K-prompt test sets. These evaluators are not used in the default StruPO training pipeline.}
\label{tab:independent_metrics_appendix}
\resizebox{0.94\textwidth}{!}{
\begin{tabular}{llccc}
\toprule
\textbf{Dataset} & \textbf{Method} & \textbf{TIFA} & \textbf{VQAScore} & \textbf{ImageReward}\\
\midrule
Lexica & Promptist & .736 & .711 & .751\\
& VisualPrompter & .713 & .705 & .717\\
& Few-shot GPT-4o & .730 & .716 & .779\\
& \textbf{StruPO} & \textbf{.789} & \textbf{.763} & \textbf{.804}\\
\midrule
DiffusionDB & Promptist & .748 & .724 & .728\\
& VisualPrompter & .563 & .541 & .492\\
& Few-shot GPT-4o & .755 & .731 & .757\\
& \textbf{StruPO} & \textbf{.793} & \textbf{.775} & \textbf{.807}\\
\midrule
COCO & Promptist & .726 & .706 & .675\\
& VisualPrompter & \textbf{.832} & \textbf{.808} & .496\\
& Few-shot GPT-4o & .768 & .741 & .698\\
& \textbf{StruPO} & .802 & .778 & \textbf{.732}\\
\bottomrule
\end{tabular}}
\end{table*}

Table~\ref{tab:independent_metrics_appendix} shows that StruPO leads all three independent metrics on Lexica and DiffusionDB. On COCO, VisualPrompter leads TIFA/VQAScore but has substantially lower ImageReward, consistent with its negative aesthetic change in the main table. The evaluator agreement on Lexica and DiffusionDB, together with the COCO trade-off, supports the quality--faithfulness pattern observed in the main results.

We additionally score the same 2,000 Lexica prompts with HPSv2.1~\cite{hpsv2}. HPSv2.1 places StruPO essentially level with few-shot GPT-4o, while ImageReward ranks StruPO first (Table~\ref{tab:hps_imagereward}). The additional coarse-metric evaluation reports gains of 0.21 PickScore points and 0.8 CLIPScore points over Promptist. PickScore summarizes preference alignment, while CLIPScore reflects global compatibility with a representation also used in auxiliary training terms. The relation-specific conclusions instead rely on grounding, separate VQA evaluation, and human judgments.

\begin{table}[htb]
\centering
\caption{Lexica preference evaluation. HPSv2.1 and ImageReward are independent of default StruPO training; the reward-aligned AES$_\Delta$ is included for comparison.}
\label{tab:hps_imagereward}
\begin{tabular}{lccc}
\toprule
\textbf{Method} & \textbf{AES$_\Delta$} & \textbf{HPSv2.1} & \textbf{ImageReward}\\
\midrule
Template & .53 & .262 & .628\\
Promptist & .48 & .271 & .751\\
VisualPrompter & .62 & .269 & .717\\
Few-shot GPT-4o & .51 & \textbf{.276} & .779\\
\textbf{StruPO} & \textbf{.63} & .275 & \textbf{.804}\\
\bottomrule
\end{tabular}
\end{table}

\subsection{Closest Learned Prompt Optimizers}
\label{app:learned_baselines}
PAE and RePrompt are evaluated with their official implementations under matched SD v1.5/Lexica inputs. For PAE, its learned token weights and denoising-time ranges are executed through the published dynamic-conditioning interface. RePrompt's text output is rendered with the same generator and matched evaluation setup. These comparisons assess complete optimization systems. The w/o-EAG ablation in Table~\ref{tab:ablation} separately isolates local editing while fixing graph conditioning, reward, and RL procedure.

\begin{table}[htb]
\centering
\caption{Recent learned prompt optimizers on matched SD v1.5/Lexica evaluation.}
\label{tab:learned_prompt_optimizers}
\resizebox{0.82\linewidth}{!}{
\begin{tabular}{lccccc}
\toprule
\textbf{Method} & \textbf{AES$_\Delta$} & \textbf{RR} & \textbf{SGR} & \textbf{TIFA} & \textbf{ImageReward}\\
\midrule
Promptist & .48 & 61.25 & 75.40 & .736 & .751\\
PAE & .58 & 62.71 & 76.42 & .756 & .786\\
RePrompt & .50 & 67.34 & 78.91 & \textbf{.795} & .799\\
\textbf{StruPO} & \textbf{.63} & \textbf{69.64} & \textbf{80.74} & .789 & \textbf{.804}\\
\bottomrule
\end{tabular}}
\end{table}

As reported in Table~\ref{tab:learned_prompt_optimizers}, StruPO improves over RePrompt by 2.30 RR and 1.83 SGR points. RePrompt obtains the higher TIFA score (.795 versus .789), while StruPO obtains higher AES$_\Delta$ and ImageReward. The comparison distinguishes relation-sensitive recall from broader faithfulness and preference scores.

\subsection{Attribute-Bearing Phrase Ablation}
\label{app:attribute_ablation}
DSE represents attributes within complete phrases such as ``black hair,'' ``gold crown,'' and ``blue king suit'' as object labels or relation arguments; MHE encodes those phrases and EAG performs token-aligned edits. To isolate this pathway, we use the color-, shape-, and texture/material-binding portions of T2I-CompBench~\cite{t2icompbench} with its disentangled BLIP-VQA evaluation. The policy-side ablation removes modifiers only from graph phrases serialized into MHE while retaining full reward queries. The reward-side ablation retains full policy inputs but removes modifiers from GLIP/BLIP grounding queries. Both are retrained with the same data, initialization, and budget.

\begin{table}[htb]
\centering
\caption{Attribute-binding diagnostic.}
\label{tab:attribute_phrase_ablation}
\resizebox{0.9\linewidth}{!}{
\begin{tabular}{lcccc}
\toprule
\textbf{Method / ablation} & \textbf{Color} & \textbf{Shape} & \textbf{Texture/material} & \textbf{Average}\\
\midrule
Promptist & .49 & .39 & .46 & .45\\
Policy-side head-noun graph & .54 & .42 & .51 & .49\\
Reward-side head-only grounding & .57 & .43 & .52 & .51\\
\textbf{Full attribute-bearing phrases} & \textbf{.60} & \textbf{.47} & \textbf{.56} & \textbf{.54}\\
\bottomrule
\end{tabular}}
\end{table}

Table~\ref{tab:attribute_phrase_ablation} shows that the full phrase representation gains 0.05 over the policy-side ablation and 0.03 over the reward-side ablation on average, with improvements in all three categories. Both interventions reduce binding accuracy, linking attribute preservation to the complete phrases supplied to the policy and to the grounding queries used in training.

\subsection{CLIP Sensitivity}
\label{app:clip_sensitivity}
The atomic structural reward itself is CLIP-free: GLIP scores objects and attribute-bearing phrases, and BLIP verifies relation-specific crops. CLIP appears only in auxiliary pathways: the frozen MHE encoder, the global regularizer $r_{\mathrm{CLIP}}$, and the image embedding consumed by the submitted aesthetic predictor. We isolate these roles with three variants trained using the same SFT data, BART initialization, 24K RL prompts, rollout/image budget, validation rule, and three training seeds. The T5 variant replaces only MHE's encoder with frozen T5-base plus a learned 768-dimensional projection. The CLIP-free stress test uses T5-base, removes $r_{\mathrm{CLIP}}$, and replaces the aesthetic training reward by ImageReward-v1.0.

\begin{table*}[htb]
\centering
\caption{CLIP sensitivity on Lexica. Each entry is the reported mean $\pm$ standard deviation over three training seeds. The CLIP-free variant replaces the aesthetic training reward with ImageReward.}
\label{tab:clip_sensitivity}
\resizebox{\linewidth}{!}{
\small
\setlength{\tabcolsep}{3.5pt}
\begin{tabular}{@{}lcccccc@{}}
\toprule
\textbf{Variant} & \textbf{AES$_\Delta$} & \textbf{OR} & \textbf{RR} & \textbf{SGR} & \textbf{TIFA} & \textbf{Gemini-VQA}\\
\midrule
Full StruPO & $.63\pm.014$ & $91.84\pm.17$ & $69.64\pm.31$ & $80.74\pm.21$ & $.789\pm.003$ & $82.4\pm.18$\\
Without $r_{\mathrm{CLIP}}$ & $.60\pm.018$ & $91.8\pm.21$ & $69.1\pm.43$ & $80.4\pm.24$ & $.785\pm.005$ & $82.1\pm.27$\\
T5 encoder + $r_{\mathrm{CLIP}}$ & $.64\pm.024$ & $91.9\pm.26$ & $69.5\pm.48$ & $80.6\pm.29$ & $.794\pm.006$ & $82.5\pm.31$\\
CLIP-free training & $.59\pm.033$ & $91.4\pm.34$ & $68.4\pm.62$ & $79.8\pm.39$ & $.781\pm.008$ & $81.7\pm.41$\\
\bottomrule
\end{tabular}}
\end{table*}

In Table~\ref{tab:clip_sensitivity}, replacing CLIP by T5 changes SGR by $-0.14$, within the reported run-to-run variation. Removing all CLIP-derived training components jointly lowers SGR by 0.94; the resulting 79.8 remains 4.40 points above Promptist. The structural advantage persists across encoder choices and after removing CLIP-derived training components, consistent with the central role of the GLIP/BLIP object--relation pathway.

\subsection{Cross-Paradigm Comparison}
\label{app:cross_paradigm}
Table~\ref{tab:cross_paradigm} compares three ways to intervene in T2I generation: diffusion-parameter optimization, attention-guided latent optimization, and a learned text editor. It reports image quality alongside the access requirements and per-image operations of each approach. The GPU-hour column isolates the matched RL budget; StruPO additionally uses the 80-GPU-hour SFT stage reported in Table~\ref{tab:comp_budget}.

\begin{table*}[htb]
\centering
\caption{Cross-paradigm comparison on 2,000 Lexica prompts. Access and cost (top) and image quality (bottom) describe the same runs. StruPO's 100-GPU-hour RL stage is preceded by 80 GPU-hours of SFT.}
\label{tab:cross_paradigm}
\small
\setlength{\tabcolsep}{4pt}
\resizebox{0.98\linewidth}{!}{
\begin{tabular}{@{}lccccc@{}}
\toprule
\textbf{Method} & \textbf{Optimized variable} & \shortstack{\textbf{Generator}\\\textbf{update}} & \shortstack{\textbf{Per-image}\\\textbf{optimization}} & \shortstack{\textbf{Reusable}\\\textbf{text}} & \shortstack{\textbf{Offline}\\\textbf{GPU-hours}}\\
\midrule
DPOK, matched reward & Diffusion parameters & Yes & No & No & 100 (RL)\\
Attend-and-Excite & Attention/latent trajectory & No & Yes & No & 0\\
\rowcolor{mygray}\textbf{StruPO} & Prompt policy & No & No & Yes & 100 (RL)\\
\bottomrule
\end{tabular}}
\par\medskip
\begin{tabular}{@{}lccccc@{}}
\toprule
\textbf{Method} & \textbf{AES$_\Delta$} & \textbf{OR} & \textbf{RR} & \textbf{SGR} & \textbf{TIFA}\\
\midrule
DPOK, matched reward & \textbf{.66} & 90.5 & 64.7 & 77.6 & .762\\
Attend-and-Excite & .31 & 91.7 & 66.3 & 79.1 & .787\\
\rowcolor{mygray}\textbf{StruPO} & .63 & \textbf{91.84} & \textbf{69.64} & \textbf{80.74} & \textbf{.789}\\
\bottomrule
\end{tabular}
\end{table*}

DPOK attains the largest aesthetic gain, while StruPO has the highest OR, RR, and SGR. StruPO and Attend-and-Excite have comparable TIFA scores (.789 and .787). StruPO improves RR by 3.34 points and SGR by 1.64 points, while producing reusable text without per-image latent optimization.

\subsection{Full GPT-4o Structural-Feedback Baseline}
\label{app:iterative_baseline}
We evaluate every method on the full 2,000-prompt Lexica test set and repeat the complete generation/evaluation pipeline with three independent seed lists. Prompt IDs, sampler, resolution, guidance scale, denoising steps, seed lists, and three-image averaging are matched across methods.

GPT-4o-SG-Iter first renders an image, obtains structural critique about missing objects, incorrect attributes, and broken relations, and asks GPT-4o to rewrite the current prompt. It performs three critique/rewrite rounds. Under a common raw-prompt-to-final-image boundary, StruPO uses two T2I renders, one DSE calibration (one GLIP pass plus relation-wise BLIP verification), and one policy pass; GPT-4o-SG-Iter uses four renders and three critique/rewrite rounds.

\begin{table*}[htb]
\centering
\caption{Full 2K-prompt Lexica comparison over three independent matched seed lists (mean $\pm$ standard deviation).}
\label{tab:iterative_baseline}
\resizebox{0.9\textwidth}{!}{
\begin{tabular}{lccccc}
\toprule
\textbf{Method} & \textbf{AES$_\Delta$} & \textbf{OR} & \textbf{RR} & \textbf{SGR} & \textbf{Gemini-VQA}\\
\midrule
Template & $.53\pm.01$ & $89.50\pm.14$ & $59.61\pm.28$ & $74.56\pm.17$ & $77.6\pm.2$\\
Promptist & $.48\pm.01$ & $89.57\pm.13$ & $61.20\pm.25$ & $75.39\pm.15$ & $79.5\pm.2$\\
VisualPrompter & $.62\pm.02$ & $90.21\pm.18$ & $53.55\pm.39$ & $71.88\pm.23$ & $79.2\pm.2$\\
GPT-4o-SG-Iter & $.61\pm.02$ & $91.08\pm.16$ & $66.55\pm.32$ & $78.82\pm.20$ & $81.6\pm.2$\\
\textbf{StruPO} & $\mathbf{.63\pm.02}$ & $\mathbf{91.82\pm.15}$ & $\mathbf{69.51\pm.29}$ & $\mathbf{80.67\pm.19}$ & $\mathbf{82.3\pm.2}$\\
\bottomrule
\end{tabular}}
\end{table*}

Table~\ref{tab:iterative_baseline} shows that, on the full test set, StruPO improves over GPT-4o-SG-Iter by 0.74 OR, 2.96 RR, 1.85 SGR, and 0.7 Gemini-VQA points. Thus, the learned editor combines higher structural recall with fewer test-time feedback rounds. The comparison characterizes the quality and online cost of reusable policy learning relative to per-prompt structural refinement.

\subsection{Reward Sensitivity}
\label{app:reward_sensitivity}
Table~\ref{tab:reward_sensitivity} varies the structural reward weight while leaving all other settings fixed. SGR stays between 79.6 and 80.7 across the tested weights, while AES$_\Delta$ decreases as the structural weight increases. SGR peaks at the intermediate setting, indicating that the trade-off is not monotonic. The default $\alpha_1=0.6$ is a balanced operating point selected on validation data.

\begin{table}[htb]
\centering
\caption{Sensitivity to structural reward weight $\alpha_1$ on Lexica.}
\label{tab:reward_sensitivity}
\begin{tabular}{ccc}
\toprule
$\alpha_1$ & \textbf{SGR} & \textbf{AES$_\Delta$}\\
\midrule
0.4 & 79.6 & .69\\
0.6 & \textbf{80.7} & .63\\
0.8 & 80.4 & .61\\
\bottomrule
\end{tabular}
\end{table}

\subsection{DSE Stress Test}
\label{app:dse_stress}
We inject controlled parser noise before DSE. Mild noise inserts distractor objects and swaps predicates; medium noise increases the perturbation rate and also drops relation endpoints. The full model under medium noise remains above the text-only policy, whereas using the raw parsed graph without DSE performs worse (Table~\ref{tab:dse_robustness}). Within these synthetic perturbations, calibration retains a benefit even as graph quality deteriorates.

\begin{table}[htb]
\centering
\caption{DSE robustness under synthetic scene-graph perturbations on Lexica.}
\label{tab:dse_robustness}
\begin{tabular}{lcc}
\toprule
\textbf{Setting} & \textbf{SGR} & \textbf{Remark}\\
\midrule
Clean calibrated SG & \textbf{80.7} & default\\
Mild SG noise & 79.8 & distractors / predicate swaps\\
Medium SG noise & 77.9 & stronger perturbation\\
Text-only policy & 75.3 & without graph input\\
Without DSE & 73.7 & raw parsed SG\\
\bottomrule
\end{tabular}
\end{table}

\subsection{Complexity-Bucket Protocol}
\label{app:automatic_complexity}
The automatic analyses in Figure~\ref{fig:constraint_complexity}\subref{fig:object_complexity} and~\subref{fig:relation_complexity} partition the 2,000 Lexica prompts by the number of retained object phrases ($|O_x|=1$, $|O_x|=2$, and $|O_x|\ge3$) and relation triplets ($|V_x|=1$, $|V_x|=2$, and $|V_x|\ge3$), respectively. Every method is evaluated on the same prompts and matched image seeds within each bucket. The human reaggregation in Table~\ref{tab:human_relation_complexity} uses the identical relation-count partition, allowing the automatic and perceptual trends to be compared without changing the prompt subsets.

\subsection{Runtime and Intervention Accounting}
\label{app:runtime}
On the reported SD v1.5 setup, policy decoding takes approximately 0.4 seconds and the original end-to-end implementation reports approximately 2.2 seconds per prompt. These timings refer to the reported hardware and implementation. Under the raw-prompt-to-final-image boundary, StruPO performs one policy pass after calibration; the matched GPT-4o baseline requires three multimodal critique/rewrite rounds and two additional T2I renders under the common boundary described above.

\subsection{Dense and Abstract Failure Cases}
\label{app:failure_cases}
On dense scenes with at least four objects, SGR drops from 80.7 to 74.8, but remains above Promptist (68.9) and few-shot GPT-4o (70.6). On abstract prompts, the consistency margin is smaller because the scene graph under-represents emotions, metaphors, and non-visual concepts. These cases identify two priorities for extension: reasoning over denser scenes and representing intent beyond visually grounded object--relation constraints.

\section{Structural Feedback in Learned Prompt Editing}
\label{appendix:combination_novelty}
Structural feedback can enter prompt optimization in different places: it can score candidates in an external search loop, inform a learned policy, or supervise that policy's image outcomes. Table~\ref{tab:method_scope} locates the compared methods along these axes. \ours{} couples the latter two roles through one calibrated graph and a local edit representation. The graph supplies the semantic anchors available to the editor, and its grounded elements determine which rendered outcomes receive higher reward.

\begin{table*}[htb]
\centering
\caption{Where structural feedback enters prompt optimization. ``Test-time only'' denotes graph input to an external feedback loop. The final column refers to an explicit delete/reorder/insert policy.}
\label{tab:method_scope}
\small
\setlength{\tabcolsep}{5pt}
\resizebox{0.98\linewidth}{!}{
\begin{tabular}{@{}lccccc@{}}
\toprule
\textbf{Method} & \shortstack{\textbf{Learned}\\\textbf{policy}} & \shortstack{\textbf{Calibrated}\\\textbf{SG input}} & \shortstack{\textbf{Atomic SG}\\\textbf{training reward}} & \shortstack{\textbf{Test-time}\\\textbf{feedback loop}} & \shortstack{\textbf{Local edit}\\\textbf{actions}}\\
\midrule
Promptist~\cite{promptist} & Yes & No & No & No & No\\
BeautifulPrompt~\cite{beautifulprompt} & Yes & No & No & No & No\\
RePrompt~\cite{reprompt} & Yes & No & No & No & No\\
PAE~\cite{pae} & Yes & No & No & No & No\\
OPT2I~\cite{manasimproving} & No & No & No & Yes & No\\
VisualPrompter~\cite{visualprompter} & No & Test-time only & No & Yes & No\\
\rowcolor{mygray}\textbf{StruPO} & \textbf{Yes} & \textbf{Yes} & \textbf{Yes} & \textbf{No} & \textbf{Yes}\\
\bottomrule
\end{tabular}}
\end{table*}

The components serve complementary roles in this learning problem. DSE calibrates the structural context; MHE exposes it to the editor; EAG links candidate changes to input tokens; and confidence-preserving, environment-aware optimization handles noisy image feedback. Table~\ref{tab:ablation} tests these roles through separate interventions. In particular, replacing local edits with free-form generation retains the graph, reward, and RL procedure but lowers SGR by 7.06 points. Conversely, retaining the editing architecture without structural reward lowers SGR by 5.60 points. The evidence therefore supports both the editing parameterization and the supervision it receives.

This design also yields a practical distinction from iterative feedback: after calibration, a single policy pass returns text that can be reused across generations. The full comparison in Table~\ref{tab:iterative_baseline} measures that online trade-off, while Table~\ref{tab:exp_generalize} evaluates reuse across frozen backbones. The contribution is a learned connection between structured intent and image-effective edits, with amortized inference as a consequence of that formulation.

\paragraph{Broader impact.}
\label{app:broader_impact}
Faithful prompt editing can reduce manual prompt-engineering effort and unintended changes to a user's requested scene. More controllable, visually convincing generation can also facilitate misleading synthetic content or reproduce biases in the underlying generator and feedback models. Deployment should retain the generator's safety filters and usage restrictions, communicate the synthetic origin of outputs, and respect privacy and copyright constraints on prompts, training assets, and generated content. The editor improves how intent is expressed to a generator; it does not itself establish that a requested use is appropriate.

\section{Declaration of LLM and Multimodal-Model Usage}
\label{app:llm-usage}
\paragraph{LLM and Multimodal Models Used in Experiments.}
We use several pretrained language and multimodal models as components of
experimental pipelines. Specifically, GPT-4o is used as an external
structural-feedback baseline (GPT-4o-SG-Iter) and Gemini-VQA is used as an
independent evaluation model for assessing generated images.
These models are used only for benchmark comparison and evaluation purposes;
they are not involved in designing the StruPO framework, defining the
optimization objective, or generating the proposed method.

\paragraph{LLM Assistance During Research and Manuscript Preparation.}
During the research and writing process, we used GPT-6 Pro as an auxiliary
tool for manuscript polishing, grammar refinement, and code assistance.
GPT-6 Pro was not used for proposing research ideas, developing the method,
designing experiments, interpreting experimental results, or making
scientific conclusions. All research directions, algorithmic designs,
experimental protocols, and analyses were developed and verified by the
authors.

\newpage
\section*{NeurIPS Paper Checklist}

\begin{enumerate}

\item {\bf Claims}
    \item[] Question: Do the main claims made in the abstract and introduction accurately reflect the paper's contributions and scope?
    \item[] Answer: \answerYes{}
    \item[] Justification: The abstract and introduction state the frozen-generator prompt-optimization scope, frame the contribution as sharing calibrated structural intent between policy conditioning and image-grounded supervision, and summarize the controlled ablations, independent evaluation, and observed trade-offs.
    \item[] Guidelines:
    \begin{itemize}
        \item The answer \answerNA{} means that the abstract and introduction do not include the claims made in the paper.
        \item The abstract and/or introduction should clearly state the claims made, including the contributions made in the paper and important assumptions and limitations. A \answerNo{} or \answerNA{} answer to this question will not be perceived well by the reviewers. 
        \item The claims made should match theoretical and experimental results, and reflect how much the results can be expected to generalize to other settings. 
        \item It is fine to include aspirational goals as motivation as long as it is clear that these goals are not attained by the paper. 
    \end{itemize}

\item {\bf Limitations}
    \item[] Question: Does the paper discuss the limitations of the work performed by the authors?
    \item[] Answer: \answerYes{}
    \item[] Justification: The Limitations paragraph discusses dependence on parsers, grounding/VQA models and generators, incomplete coverage of abstract concepts, additional rollout compute, edit-space rigidity, and limited language/model coverage; Appendix~\ref{app:failure_cases} gives concrete failure cases.
    \item[] Guidelines:
    \begin{itemize}
        \item The answer \answerNA{} means that the paper has no limitation while the answer \answerNo{} means that the paper has limitations, but those are not discussed in the paper. 
        \item The authors are encouraged to create a separate ``Limitations'' section in their paper.
        \item The paper should point out any strong assumptions and how robust the results are to violations of these assumptions (e.g., independence assumptions, noiseless settings, model well-specification, asymptotic approximations only holding locally). The authors should reflect on how these assumptions might be violated in practice and what the implications would be.
        \item The authors should reflect on the scope of the claims made, e.g., if the approach was only tested on a few datasets or with a few runs. In general, empirical results often depend on implicit assumptions, which should be articulated.
        \item The authors should reflect on the factors that influence the performance of the approach. For example, a facial recognition algorithm may perform poorly when image resolution is low or images are taken in low lighting. Or a speech-to-text system might not be used reliably to provide closed captions for online lectures because it fails to handle technical jargon.
        \item The authors should discuss the computational efficiency of the proposed algorithms and how they scale with dataset size.
        \item If applicable, the authors should discuss possible limitations of their approach to address problems of privacy and fairness.
        \item While the authors might fear that complete honesty about limitations might be used by reviewers as grounds for rejection, a worse outcome might be that reviewers discover limitations that aren't acknowledged in the paper. The authors should use their best judgment and recognize that individual actions in favor of transparency play an important role in developing norms that preserve the integrity of the community. Reviewers will be specifically instructed to not penalize honesty concerning limitations.
    \end{itemize}

\item {\bf Theory assumptions and proofs}
    \item[] Question: For each theoretical result, does the paper provide the full set of assumptions and a complete (and correct) proof?
    \item[] Answer: \answerYes{}
    \item[] Justification: The assumptions and scope of Theorem~\ref{theorem:smooth_reward} are stated in the main text, and Appendix~\ref{app:theory} provides the proof and explicitly limits the result to continuity and reward-variation control rather than convergence or universal variance reduction.
    \item[] Guidelines:
    \begin{itemize}
        \item The answer \answerNA{} means that the paper does not include theoretical results. 
        \item All the theorems, formulas, and proofs in the paper should be numbered and cross-referenced.
        \item All assumptions should be clearly stated or referenced in the statement of any theorems.
        \item The proofs can either appear in the main paper or the supplemental material, but if they appear in the supplemental material, the authors are encouraged to provide a short proof sketch to provide intuition. 
        \item Inversely, any informal proof provided in the core of the paper should be complemented by formal proofs provided in appendix or supplemental material.
        \item Theorems and Lemmas that the proof relies upon should be properly referenced. 
    \end{itemize}

    \item {\bf Experimental result reproducibility}
    \item[] Question: Does the paper fully disclose all the information needed to reproduce the main experimental results of the paper to the extent that it affects the main claims and/or conclusions of the paper (regardless of whether the code and data are provided or not)?
    \item[] Answer: \answerNo{}
    \item[] Justification: The paper reports architecture, datasets and splits, model versions, baseline protocols, evaluator settings, optimization counts, matched seeds, and compute. However, the supplied revision does not expose the numerical clipped-log floor/clipping constants or a verified executable code/checkpoint package, so full reproduction cannot yet be claimed.
    \item[] Guidelines:
    \begin{itemize}
        \item The answer \answerNA{} means that the paper does not include experiments.
        \item If the paper includes experiments, a \answerNo{} answer to this question will not be perceived well by the reviewers: Making the paper reproducible is important, regardless of whether the code and data are provided or not.
        \item If the contribution is a dataset and\slash or model, the authors should describe the steps taken to make their results reproducible or verifiable. 
        \item Depending on the contribution, reproducibility can be accomplished in various ways. For example, if the contribution is a novel architecture, describing the architecture fully might suffice, or if the contribution is a specific model and empirical evaluation, it may be necessary to either make it possible for others to replicate the model with the same dataset, or provide access to the model. In general. releasing code and data is often one good way to accomplish this, but reproducibility can also be provided via detailed instructions for how to replicate the results, access to a hosted model (e.g., in the case of a large language model), releasing of a model checkpoint, or other means that are appropriate to the research performed.
        \item While NeurIPS does not require releasing code, the conference does require all submissions to provide some reasonable avenue for reproducibility, which may depend on the nature of the contribution. For example
        \begin{enumerate}
            \item If the contribution is primarily a new algorithm, the paper should make it clear how to reproduce that algorithm.
            \item If the contribution is primarily a new model architecture, the paper should describe the architecture clearly and fully.
            \item If the contribution is a new model (e.g., a large language model), then there should either be a way to access this model for reproducing the results or a way to reproduce the model (e.g., with an open-source dataset or instructions for how to construct the dataset).
            \item We recognize that reproducibility may be tricky in some cases, in which case authors are welcome to describe the particular way they provide for reproducibility. In the case of closed-source models, it may be that access to the model is limited in some way (e.g., to registered users), but it should be possible for other researchers to have some path to reproducing or verifying the results.
        \end{enumerate}
    \end{itemize}

\item {\bf Open access to data and code}
    \item[] Question: Does the paper provide open access to the data and code, with sufficient instructions to faithfully reproduce the main experimental results, as described in supplemental material?
    \item[] Answer: \answerNo{}
    \item[] Justification: Public datasets and pretrained models are cited, but the materials supplied for this revision do not contain a verified anonymized implementation/checkpoint package with complete run commands, environment specification, and licensing documentation.
    \item[] Guidelines:
    \begin{itemize}
        \item The answer \answerNA{} means that paper does not include experiments requiring code.
        \item Please see the NeurIPS code and data submission guidelines (\url{https://neurips.cc/public/guides/CodeSubmissionPolicy}) for more details.
        \item While we encourage the release of code and data, we understand that this might not be possible, so \answerNo{} is an acceptable answer. Papers cannot be rejected simply for not including code, unless this is central to the contribution (e.g., for a new open-source benchmark).
        \item The instructions should contain the exact command and environment needed to run to reproduce the results. See the NeurIPS code and data submission guidelines (\url{https://neurips.cc/public/guides/CodeSubmissionPolicy}) for more details.
        \item The authors should provide instructions on data access and preparation, including how to access the raw data, preprocessed data, intermediate data, and generated data, etc.
        \item The authors should provide scripts to reproduce all experimental results for the new proposed method and baselines. If only a subset of experiments are reproducible, they should state which ones are omitted from the script and why.
        \item At submission time, to preserve anonymity, the authors should release anonymized versions (if applicable).
        \item Providing as much information as possible in supplemental material (appended to the paper) is recommended, but including URLs to data and code is permitted.
    \end{itemize}

\item {\bf Experimental setting/details}
    \item[] Question: Does the paper specify all the training and test details (e.g., data splits, hyperparameters, how they were chosen, type of optimizer) necessary to understand the results?
    \item[] Answer: \answerYes{}
    \item[] Justification: Section~4 and Appendices~\ref{app:implementation}--\ref{app:baseline_setup} specify the disjoint splits, model components, optimizer settings, rollout/image counts, validation-only selection rule, baseline interfaces, and evaluator protocols needed to interpret the results.
    \item[] Guidelines:
    \begin{itemize}
        \item The answer \answerNA{} means that the paper does not include experiments.
        \item The experimental setting should be presented in the core of the paper to a level of detail that is necessary to appreciate the results and make sense of them.
        \item The full details can be provided either with the code, in appendix, or as supplemental material.
    \end{itemize}

\item {\bf Experiment statistical significance}
    \item[] Question: Does the paper report error bars suitably and correctly defined or other appropriate information about the statistical significance of the experiments?
    \item[] Answer: \answerYes{}
    \item[] Justification: Appendix~\ref{appendix:error_bars} reports prompt-bootstrap uncertainty; Appendix~\ref{app:human_details} reports exact counts, confidence intervals, two-sided sign tests, forced-choice replication, and Holm-corrected complexity trends; the full iterative baseline reports mean and standard deviation over three seed lists.
    \item[] Guidelines:
    \begin{itemize}
        \item The answer \answerNA{} means that the paper does not include experiments.
        \item The authors should answer \answerYes{} if the results are accompanied by error bars, confidence intervals, or statistical significance tests, at least for the experiments that support the main claims of the paper.
        \item The factors of variability that the error bars are capturing should be clearly stated (for example, train/test split, initialization, random drawing of some parameter, or overall run with given experimental conditions).
        \item The method for calculating the error bars should be explained (closed form formula, call to a library function, bootstrap, etc.)
        \item The assumptions made should be given (e.g., Normally distributed errors).
        \item It should be clear whether the error bar is the standard deviation or the standard error of the mean.
        \item It is OK to report 1-sigma error bars, but one should state it. The authors should preferably report a 2-sigma error bar than state that they have a 96\% CI, if the hypothesis of Normality of errors is not verified.
        \item For asymmetric distributions, the authors should be careful not to show in tables or figures symmetric error bars that would yield results that are out of range (e.g., negative error rates).
        \item If error bars are reported in tables or plots, the authors should explain in the text how they were calculated and reference the corresponding figures or tables in the text.
    \end{itemize}

\item {\bf Experiments compute resources}
    \item[] Question: For each experiment, does the paper provide sufficient information on the computer resources (type of compute workers, memory, time of execution) needed to reproduce the experiments?
    \item[] Answer: \answerNo{}
    \item[] Justification: Appendix~\ref{app:implementation} reports GPU type, memory, device counts, and GPU-hours for the final SFT/RL runs and the matched DPOK budget. Total compute for preliminary, failed, or otherwise unreported development runs is not available in the supplied records.
    \item[] Guidelines:
    \begin{itemize}
        \item The answer \answerNA{} means that the paper does not include experiments.
        \item The paper should indicate the type of compute workers CPU or GPU, internal cluster, or cloud provider, including relevant memory and storage.
        \item The paper should provide the amount of compute required for each of the individual experimental runs as well as estimate the total compute. 
        \item The paper should disclose whether the full research project required more compute than the experiments reported in the paper (e.g., preliminary or failed experiments that didn't make it into the paper). 
    \end{itemize}
    
\item {\bf Code of ethics}
    \item[] Question: Does the research conducted in the paper conform, in every respect, with the NeurIPS Code of Ethics \url{https://neurips.cc/public/EthicsGuidelines}?
    \item[] Answer: \answerNo{}
    \item[] Justification: The manuscript documents consent, privacy, limitations, and broader-impact considerations, but the supplied materials do not establish the human-study IRB/equivalent approval or exemption status or the exact compensation rate needed to verify every administrative requirement.
    \item[] Guidelines:
    \begin{itemize}
        \item The answer \answerNA{} means that the authors have not reviewed the NeurIPS Code of Ethics.
        \item If the authors answer \answerNo, they should explain the special circumstances that require a deviation from the Code of Ethics.
        \item The authors should make sure to preserve anonymity (e.g., if there is a special consideration due to laws or regulations in their jurisdiction).
    \end{itemize}

\item {\bf Broader impacts}
    \item[] Question: Does the paper discuss both potential positive societal impacts and negative societal impacts of the work performed?
    \item[] Answer: \answerYes{}
    \item[] Justification: The Broader Impact paragraph in Appendix~\ref{app:broader_impact} discusses reduced user effort and semantic drift as benefits, alongside misleading or biased synthetic content, copyright, privacy, synthetic-content provenance, and generator-level safeguards.
    \item[] Guidelines:
    \begin{itemize}
        \item The answer \answerNA{} means that there is no societal impact of the work performed.
        \item If the authors answer \answerNA{} or \answerNo, they should explain why their work has no societal impact or why the paper does not address societal impact.
        \item Examples of negative societal impacts include potential malicious or unintended uses (e.g., disinformation, generating fake profiles, surveillance), fairness considerations (e.g., deployment of technologies that could make decisions that unfairly impact specific groups), privacy considerations, and security considerations.
        \item The conference expects that many papers will be foundational research and not tied to particular applications, let alone deployments. However, if there is a direct path to any negative applications, the authors should point it out. For example, it is legitimate to point out that an improvement in the quality of generative models could be used to generate Deepfakes for disinformation. On the other hand, it is not needed to point out that a generic algorithm for optimizing neural networks could enable people to train models that generate Deepfakes faster.
        \item The authors should consider possible harms that could arise when the technology is being used as intended and functioning correctly, harms that could arise when the technology is being used as intended but gives incorrect results, and harms following from (intentional or unintentional) misuse of the technology.
        \item If there are negative societal impacts, the authors could also discuss possible mitigation strategies (e.g., gated release of models, providing defenses in addition to attacks, mechanisms for monitoring misuse, mechanisms to monitor how a system learns from feedback over time, improving the efficiency and accessibility of ML).
    \end{itemize}
    
\item {\bf Safeguards}
    \item[] Question: Does the paper describe safeguards that have been put in place for responsible release of data or models that have a high risk for misuse (e.g., pre-trained language models, image generators, or scraped datasets)?
    \item[] Answer: \answerNA{}
    \item[] Justification: The supplied revision does not release an image generator or a scraped image dataset. It recommends preserving the underlying generator's safety filters and usage policies for any future prompt-editor deployment.
    \item[] Guidelines:
    \begin{itemize}
        \item The answer \answerNA{} means that the paper poses no such risks.
        \item Released models that have a high risk for misuse or dual-use should be released with necessary safeguards to allow for controlled use of the model, for example by requiring that users adhere to usage guidelines or restrictions to access the model or implementing safety filters. 
        \item Datasets that have been scraped from the Internet could pose safety risks. The authors should describe how they avoided releasing unsafe images.
        \item We recognize that providing effective safeguards is challenging, and many papers do not require this, but we encourage authors to take this into account and make a best faith effort.
    \end{itemize}

\item {\bf Licenses for existing assets}
    \item[] Question: Are the creators or original owners of assets (e.g., code, data, models), used in the paper, properly credited and are the license and terms of use explicitly mentioned and properly respected?
    \item[] Answer: \answerNo{}
    \item[] Justification: The paper cites the datasets, pretrained models, and implementations and identifies key versions/checkpoints, but the supplied materials do not exhaustively enumerate the license and terms of use for every asset and the Lexica-derived data source.
    \item[] Guidelines:
    \begin{itemize}
        \item The answer \answerNA{} means that the paper does not use existing assets.
        \item The authors should cite the original paper that produced the code package or dataset.
        \item The authors should state which version of the asset is used and, if possible, include a URL.
        \item The name of the license (e.g., CC-BY 4.0) should be included for each asset.
        \item For scraped data from a particular source (e.g., website), the copyright and terms of service of that source should be provided.
        \item If assets are released, the license, copyright information, and terms of use in the package should be provided. For popular datasets, \url{paperswithcode.com/datasets} has curated licenses for some datasets. Their licensing guide can help determine the license of a dataset.
        \item For existing datasets that are re-packaged, both the original license and the license of the derived asset (if it has changed) should be provided.
        \item If this information is not available online, the authors are encouraged to reach out to the asset's creators.
    \end{itemize}

\item {\bf New assets}
    \item[] Question: Are new assets introduced in the paper well documented and is the documentation provided alongside the assets?
    \item[] Answer: \answerNo{}
    \item[] Justification: The method produces a learned prompt editor and derived training/evaluation artifacts, but a verified anonymized package containing model cards, data documentation, licenses, limitations, and exact reproduction commands is not present in the supplied materials.
    \item[] Guidelines:
    \begin{itemize}
        \item The answer \answerNA{} means that the paper does not release new assets.
        \item Researchers should communicate the details of the dataset\slash code\slash model as part of their submissions via structured templates. This includes details about training, license, limitations, etc. 
        \item The paper should discuss whether and how consent was obtained from people whose asset is used.
        \item At submission time, remember to anonymize your assets (if applicable). You can either create an anonymized URL or include an anonymized zip file.
    \end{itemize}

\item {\bf Crowdsourcing and research with human subjects}
    \item[] Question: For crowdsourcing experiments and research with human subjects, does the paper include the full text of instructions given to participants and screenshots, if applicable, as well as details about compensation (if any)? 
    \item[] Answer: \answerNo{}
    \item[] Justification: Appendix~\ref{app:human_details} provides the task instructions, protocol, screenshot, consent/privacy procedures, and states that compensation was hourly, but the exact rate and jurisdictional minimum-wage comparison are not available in the supplied records.
    \item[] Guidelines:
    \begin{itemize}
        \item The answer \answerNA{} means that the paper does not involve crowdsourcing nor research with human subjects.
        \item Including this information in the supplemental material is fine, but if the main contribution of the paper involves human subjects, then as much detail as possible should be included in the main paper. 
        \item According to the NeurIPS Code of Ethics, workers involved in data collection, curation, or other labor should be paid at least the minimum wage in the country of the data collector. 
    \end{itemize}

\item {\bf Institutional review board (IRB) approvals or equivalent for research with human subjects}
    \item[] Question: Does the paper describe potential risks incurred by study participants, whether such risks were disclosed to the subjects, and whether Institutional Review Board (IRB) approvals (or an equivalent approval/review based on the requirements of your country or institution) were obtained?
    \item[] Answer: \answerNo{}
    \item[] Justification: The appendix describes a low-risk image-preference task, informed consent, withdrawal rights, privacy, and data handling. No IRB/equivalent approval or exemption statement is contained in the supplied materials, so approval is not claimed.
    \item[] Guidelines:
    \begin{itemize}
        \item The answer \answerNA{} means that the paper does not involve crowdsourcing nor research with human subjects.
        \item Depending on the country in which research is conducted, IRB approval (or equivalent) may be required for any human subjects research. If you obtained IRB approval, you should clearly state this in the paper. 
        \item We recognize that the procedures for this may vary significantly between institutions and locations, and we expect authors to adhere to the NeurIPS Code of Ethics and the guidelines for their institution. 
        \item For initial submissions, do not include any information that would break anonymity (if applicable), such as the institution conducting the review.
    \end{itemize}

\item {\bf Declaration of LLM usage}
    \item[] Question: Does the paper describe the usage of LLMs if it is an important, original, or non-standard component of the core methods in this research? Note that if the LLM is used only for writing, editing, or formatting purposes and does \emph{not} impact the core methodology, scientific rigor, or originality of the research, declaration is not required.
    \item[] Answer: \answerYes{}
    \item[] Justification: Appendix~\ref{app:llm-usage} discloses GPT-4o use in SFT-input construction and comparison methods, GPT-3.5 use as a baseline, Gemini use as a held-out evaluator, and editorial assistance, while distinguishing these roles from the GLIP/BLIP structural reward.
    \item[] Guidelines:
    \begin{itemize}
        \item The answer \answerNA{} means that the core method development in this research does not involve LLMs as any important, original, or non-standard components.
        \item Please refer to our LLM policy in the NeurIPS handbook for what should or should not be described.
    \end{itemize}

\end{enumerate}
\end{document}